\PassOptionsToPackage{table}{xcolor}
\documentclass[runningheads]{llncs}

\usepackage{eccv}

\usepackage{eccvabbrv}

\usepackage{tikz}
\usetikzlibrary{arrows.meta,calc}
\definecolor{oiBlue}{HTML}{0072B2}
\definecolor{oiVermillion}{HTML}{D55E00}
\definecolor{oiGreen}{HTML}{009E73}
\definecolor{oiBrown}{HTML}{8C510A}
\definecolor{mBlue}{HTML}{3B6EA8}
\definecolor{mRed}{HTML}{B24A3A}
\definecolor{mGreen}{HTML}{3A7F5C}
\definecolor{tfBlue}{HTML}{4C72B0}      
\definecolor{tfOrange}{HTML}{DD8452}    
\definecolor{tfTeal}{HTML}{55A868}      
\definecolor{mBlueEmph}{HTML}{24476B}
\definecolor{oiVermillionEmph}{HTML}{A84700}

\colorlet{dmLine}{cyan!70!teal}
\colorlet{fmLine}{magenta!65!black}

\usepackage{amsthm}

\theoremstyle{definition}

\theoremstyle{definition}
\newtheorem{myproperty}{Property}

\newtheoremstyle{boldremark}
  {3pt}{3pt}            
  {\normalfont}         
  {}                    
  {\bfseries}           
  {.}                   
  { }                   
  {}                    

\theoremstyle{boldremark}
\newtheorem{limitation}[]{Limitation}

\usepackage[linesnumbered,ruled,vlined]{algorithm2e}
\usepackage{algorithmic}
\usepackage{amsmath}
\usepackage{adjustbox}
\usepackage{graphicx}
\usepackage{tabularx}
\usepackage{subcaption}  
\usepackage{wrapfig}
\usepackage{booktabs}
\usepackage{sg-macros}
\usepackage{chngcntr}
\usepackage{multirow}
\usepackage{caption}
\usepackage{bm}
\usepackage{enumitem}
\usepackage{pifont}
\usepackage[accsupp]{axessibility}  

\newcommand{\bmx}{\boldsymbol{x}}
\newcommand{\bmy}{\boldsymbol{y}}
\newcommand{\bmz}{\boldsymbol{z}}
\newcommand{\bmh}{\boldsymbol{h}}

\newcommand{\bmv}{\boldsymbol{v}}

\newcommand{\cmark}{\textcolor{green!60!black}{\ding{51}}}
\newcommand{\xmark}{\textcolor{red}{\ding{55}}}

\newcommand{\KL}[2]{\text{KL}\left(#1 \middle\| #2\right)}

\newcommand\mypara[1]{\noindent\textbf{#1}}

\usepackage{etoc}

\usepackage{hyperref}

\usepackage{orcidlink}

\begin{document}

\title{Straight-Path Flow Matching for Incomplete Multi-View Clustering} 


\newcommand{\equalcontrib}{\textsuperscript{*}}
\newcommand{\corresponding}{\textsuperscript{\(\dagger\)}}

\author{Yiteng Yuan\inst{1}\equalcontrib \and
Junyan Wang\inst{2}\equalcontrib\corresponding \and
Zheyuan Liu\inst{2} \and
Hong Jia\inst{3} \and
Lei Fan\inst{4} \and\\
Zhulin Tao\inst{5}\corresponding \and
Lianbo Guo\inst{1}}

\authorrunning{Y.~Yuan \& J. Wang et al.}


\institute{School of Software Engineering, Huazhong University of Science and Technology, Wuhan, China \and
Australian Institute for Machine Learning, Adelaide University, Adelaide, Australia \and
University of Auckland, Auckland, New Zealand \and
University of New South Wales, Sydney, Australia \and
Communication University of China, Beijing, China\\
\email{yuanyiteng\&lbguo@hust.edu.cn},~\email{junyan.wang\&zheyuan.liu@adelaide.edu.au} \\
\email{hong.jia@auckland.ac.nz},~\email{lei.fan1@unsw.edu.au},~\email{taozl@cuc.edu.cn}}

\maketitle
\renewcommand{\thefootnote}{}
\footnotetext{
\(\star\) Equal contribution. 
\(\dagger\) Corresponding author.
}
\renewcommand{\thefootnote}{\arabic{footnote}}

\begin{abstract}
  Incomplete Multi-View Clustering addresses the problem of clustering multi-modal data when certain views are missing.
  Recent end-to-end generative approaches leverage diffusion models to recover missing views via stochastic noise-to-data trajectories.
  While expressive, such mechanisms are not explicitly designed for clustering, as they initialize from cluster-agnostic noise and rely on stochastic denoising dynamics.
  In this work, we revisit probability path design in end-to-end generative IMVC. We introduce a flow-matching framework with a linear interpolation path between paired view representations, that replaces diffusion with probability flows between observed and missing views.
  We provide a formal analysis showing that deterministic ODE flows are inherently better aligned with clustering objectives than diffusion-based stochastic trajectories, especially in terms of transport mechanisms that respect class-conditional data distributions and maintain cluster consistency in finite-step regimes.
  Building upon this insight, we develop an end-to-end IMVC architecture that integrates straight-path flow-matching view completion with cluster-level and entropy-based alignment to enforce cross-view clustering consistency.
  Extensive experiments on standard IMVC benchmarks demonstrate that the proposed framework establishes new state-of-the-art performance.
  
  \keywords{Incomplete Multi-View Clustering \and Flow Matching \and ODE}
\end{abstract}

\section{Introduction}
\label{sec:intro}

The task of Incomplete Multi-View Clustering (IMVC)~\cite{gao2016incomplete} addresses the challenge of multi-view clustering (MVC)~\cite{bickel2004multi}, with an additional layer of difficulty, \ie not all views are consistently available for each sample.
Compared to the traditional MVC task setup, IMVC more closely reflects real-world use cases involving multi-view data~\cite{wen2024diffusion,zhang2023robust,zhang2025incomplete}, as such data collection processes may encounter sensor malfunctions, occlusions, disk space limitations, or even storage media corruptions.

Traditional IMVC methods~\cite{yin2021incomplete,liu2020efficient,zhao2017multi,wen2023graph,liu2019efficient,wen2018incomplete} focus on latent alignment and reconstruction under missing-view constraints.
Recently, generative modeling~\cite{ho2020denoising,songdenoising,dhariwal2021diffusion} has emerged as a powerful alternative paradigm.
Diffusion-based approaches~\cite{wen2024diffusion,zhang2025incomplete}, in particular, formulate missing-view recovery as conditional generation: starting from Gaussian noise, a stochastic denoising process conditioned on the observed view progressively reconstructs the missing representation.
Earlier works~\cite{wen2024diffusion} adopt a two-stage pipeline in which clustering is performed after generative recovery.
Recent research has shifted toward end-to-end IMVC frameworks~\cite{zhang2025incomplete} that optimize missing-view recovery and clustering jointly.
In the joint framework, the generative module is trained under clustering-oriented supervision, so the intermediate and final recovered features directly determine the clustering embeddings and assignments during training. Consequently, the properties of the generative trajectory directly influence the structural organization of the learned representations, as cross-view discrepancies are progressively reduced and intra-cluster coherence is reinforced under clustering-oriented supervision.

However, diffusion-based generative modeling typically relies on stochastic dynamics, often formulated as stochastic differential equations (SDEs)~\cite{song2020score}, that transform cluster-agnostic Gaussian noise into data representations through progressive denoising.
As illustrated in~\figref{fig:fm_vs_diffusion_paths}, the reverse process starts from noise that is independent of the observed sample and gradually reconstructs the target representation, causing trajectories to traverse regions of the latent space that are initially unrelated to the underlying cluster structure. Moreover, during the early and intermediate stages of denoising, the representations remain strongly influenced by noise, and cluster-discriminative structure becomes evident only in the low-noise regime (\figref{fig:diffusion-seperate}).
Although such properties are natural for general-purpose generation, IMVC instead requires view-completion mechanisms that are deterministically conditioned on observed representations and preserve the clustering structure of the representations throughout the transport process.

To this end, we adopt flow matching~\cite{lipman2022flow} to learn a deterministic ODE transport operator for missing-view recovery.
Unlike stochastic diffusion trajectories, flow matching enables direct modeling of continuous transports anchored at the observed representation and consistent with the clustering structure of the representation space.
Concretely, we specify a linear interpolation (\ie straight-path) between paired latent representations of observed and missing views, and train a neural vector field to match the corresponding path velocity.
This yields trajectories that evolve consistently with the underlying cluster structure while requiring only a small number of integration steps.
A formal analysis comparing diffusion trajectories and flow-based transport is provided to clarify their structural differences.
Building on this formulation, we integrate straight-path completion with an end-to-end IMVC framework that combines within-view reconstruction and cross-view alignment, where the cluster-level contrastive loss enforces assignment consistency and the entropy-based alignment stabilizes probability predictions.
The resulting design jointly optimizes view recovery and clustering consistency within a unified framework.

Our contributions are summarized as follows:
\begin{itemize}
    \item We propose a flow-matching framework for incomplete multi-view clustering (IMVC) that performs cross-view completion by learning a vector field to transport an observed latent representation to its missing-view counterpart along a straight-path formulation.
    \item We present a formal analysis showing straight-path transport and finite-step cluster separability of flow matching under standard assumptions.
    \item We develop an end-to-end model that integrates within-view reconstruction, cross-view alignment (cluster-level and entropic alignment), and straight-path completion, achieving SOTA performance on IMVC benchmarks.
\end{itemize}

\section{Related Work}
\label{sec:related-work}

\mypara{Deep Incomplete Multi-View Clustering.}
Handling missing modalities is a central challenge in incomplete multi-view clustering (IMVC).
Autoencoder-based approaches~\cite{zhang2019cpm,wei2020deep,xu2022deep} learn latent representations by reconstructing observed views.
Graph-based methods~\cite{wen2021structural,wang2022incomplete,chao2024incomplete} propagate information across samples through graph structures. 
Contrastive learning approaches~\cite{lin2021completer,feng2024partial,li2023incomplete} improve cross-view representation consistency by maximizing agreement between views.
To explicitly reconstruct missing modalities, recent studies introduce generative models~\cite{gui2021review}. 
Generative Adversarial Networks (GANs)~\cite{wang2020generative,lin2023consistent,wang2023self} enable view synthesis but often suffer from training instability and mode collapse. 
Variational Autoencoders (VAEs)~\cite{kingma2013auto,cai2023realize} provide a probabilistic framework for view generation.
More recently, diffusion models have achieved remarkable progress in generative modeling~\cite{ho2020denoising,podell2023sdxl,labs2025flux}.
They have also been explored for missing-view generation in IMVC~\cite{wen2024diffusion,zhang2025incomplete}.
However, diffusion models generate representations through stochastic noise-to-data trajectories that start from cluster-agnostic noise, so the transport process does not explicitly preserve the clustering structure of the representations.

\mypara{Flow Matching in Generative Modeling.}
Flow Matching (FM)~\cite{lipman2022flow,liu2022flow,klein2023equivariant} is a generative framework that learns deterministic vector fields to transport samples between probability distributions through ODE flows. 
Instead of relying on stochastic noise injection and iterative denoising as in diffusion models, FM directly models probability transport along predefined interpolation paths between data distributions. 
Several extensions have further expanded its applicability, including latent-space flow matching~\cite{dao2023flow}, flow generator matching~\cite{huang2024flow}, and hybrid diffusion–flow formulations~\cite{schusterbauer2025diff2flow}. 
The deterministic transport formulation of flow matching provides a natural mechanism for modeling representation transport between paired observations. 
Motivated by this formulation, we adopt flow matching to model cross-view completion in the latent representation space for incomplete multi-view clustering.

\section{Methodology}
\label{sec:method}

In this section, we first concretely demonstrate that, under three assumptions, flow matching exhibits desirable theoretical properties over conditional diffusion models when applied to the task of IMVC (\secref{sec:theoretical-analysis}).
Based on the theoretical analysis, we proceed to propose our method of adopting flow matching for the task in~\secref{sec:flow-based-method}.

\subsection{Motivation of Adopting Flow Matching}\label{sec:theoretical-analysis}

\subsubsection{Problem Setup.}\label{sec:problem-setup-short}

Without loss of generality, we consider a two-view IMVC setting.
Let $\mathcal{X} \subset \mathbb{R}^D$ denote the original data space and $\mathcal{Z} \subset \mathbb{R}^d$ the corresponding latent representation space.
For each sample, the raw observations from the two views are denoted by $\boldsymbol{x}_1, \boldsymbol{x}_2 \in \mathcal{X}$, and their encoded latent representations are $\boldsymbol{z}_1, \boldsymbol{z}_2 \in \mathcal{Z}$, respectively.

Under this formulation, the core objective of IMVC is as follows: given an observed view representation (\eg $\bmz_1$), recover the missing-view representation (\eg $\bmz_2$) such that both representations correspond to the same cluster.
Equivalently, if $c: \mathcal{Z} \to \{1,\dots,K\}$ denotes the clustering assignment function, the desired consistency condition is $c(\boldsymbol{z}_1) = c(\boldsymbol{z}_2)$.

\begin{wrapfigure}{r}{0.48\columnwidth}
    \vspace{-2.5em} 
    \centering
\begin{tikzpicture}[scale=0.4]

    \def\geomLine{black!70}
    \def\geomText{black!65}
    \def\geomDash{black!40}

    \def\fmLine{magenta!65!black}
    \def\fmFill{magenta!60!black}
    \def\fmText{magenta!70!black}

    \def\dmLine{cyan!70!teal}
    \def\dmFill{cyan!65!teal}
    \def\dmText{cyan!60!black}

    \draw[very thick,\geomLine] (0,0) to[out=20,in=160] (8,1.5);
    \node[\geomText] at (4,2) {$\mathcal{M}$};
    
    \draw[oiVermillion,very thick,-{Stealth[length=6pt]}] (3.5,0.5) -- (6,1.2);
    \fill[oiVermillion] (3.5,0.5) circle (4pt) node[below,black!75] {$\bmz_1$};
    \fill[oiVermillion] (6,1.2) circle (4pt) node[right,black!75] {$\bmz_2$};
    \node[oiVermillion,align=left,font=\small] at (7.5,-0.5) {\textbf{Flow}\\\textbf{Matching}};
    
    \draw[oiBlue,very thick,-{Stealth[length=6pt]},smooth]
      plot[domain=0:1,samples=80]
      ({5 + (6-5)*\x + 1.0*(1-\x)*(1-\x)*sin(deg(10*pi*\x))},
       {8 + (1.2-8)*\x});
       
    \fill[oiBlue] (5,8) circle (4pt)
        node[below right,black!75] {$\bmz_0 \sim \mathcal{N}(0,I)$};
    \node[oiBlue,align=center,font=\small] at (2.2,7.0) {\textbf{Diffusion}};
    
    \draw[\geomDash,dashed] (5,8) -- (5,1.5);

\end{tikzpicture}\vspace{2pt}
\caption{
\textbf{Comparison between Flow Matching and Diffusion trajectories.}
Without loss of generality, we illustrate a two-view IMVC setting, where $\boldsymbol{z}_1$ denotes an observed view and $\boldsymbol{z}_2$ denotes a missing view. 
$\mathcal{M}$ denotes the region of the representation space that contains the cluster data, while $\boldsymbol{z}_0$ represents Gaussian noise. 
\textbf{\textcolor{oiBlue}{Diffusion trajectory}} deviates from $\mathcal{M}$ due to injected noise, resulting in a long and winding path when recovering $\boldsymbol{z}_2$. 
In contrast, the \textbf{\textcolor{oiVermillion}{Flow Matching trajectory}} remains within the data region $\mathcal{M}$, producing a more direct path that shortens as clustering improves.
}
\label{fig:fm_vs_diffusion_paths}
    
\vspace{-5em} 
\end{wrapfigure}

\subsubsection{Assumptions.}\label{sec:assumptions}
We establish all arguments under the following three assumptions.
Note that they are shared by both the conditional diffusion models and flow matching.

\begin{enumerate}[label=\textit{(\roman*)}]
    \item \label{assump:separable_representation}
    \textit{Cluster Separability.} The cluster sets $\mathcal{S}_k$ $(k=1,\ldots,K)$ in the representation space are compact subsets of $\mathbb{R}^d$. Any two clusters have a strictly positive minimum distance $\delta>0$, \ie $\inf_{i\neq j} d(\mathcal{S}_i,\mathcal{S}_j) = \delta > 0$.

    \item \label{assump:view_consistency}
    \textit{View Consistency.} Different-view representations of the same sample belong to the same cluster, \ie the clustering label mapping $c(\cdot)$ satisfies: for any sample representations $\boldsymbol{z}^{v}$ and $\boldsymbol{z}^{v'}$, $c(\boldsymbol{z}^{v}) = c(\boldsymbol{z}^{v'})$.

    \item \label{assump:perfect_approx}
    \textit{Sufficient Model Approximation.} 
    The neural network can sufficiently approximate the target function with bounded approximation error. \ie the model can capture the true probability flow or the vector field.
   
\end{enumerate}

\subsubsection{Preliminary on Flow Matching.}
\label{sec:preliminary-flow-matching}
In IMVC, to address the view-completion and the clustering objectives, flow matching directly constructs a deterministic ODE flow from the observed view representation $\bmz_1$ to the completed view representation $\bmz_2$.
Here, time domain $t \in [0,1]$, with $t=0$ for the observed view, $t=1$ for the completed view.
We note that flow matching does not involve random terms in its formulation, or noise initializations.

\paragraph{Interpolation Process.}

Flow matching first performs a linear interpolation between the observed view $\bmz_1$ (source distribution $p_0 = p(\bmz_1)$) and the completed view $\bmz_2$ (target distribution $p_1 = p(\bmz_2)$) to construct the interpolated representation $\bmx_t$ at time $t$.
The corresponding target vector field is the velocity of the interpolation path, \ie
\begin{align}
    \bmx_t = (1-t) \bmz_1 + t \bmz_2&, \quad t \in [0,1], \label{eqn:flow-1}\\
    \bmv_{\text{target}}(\bmx_t, t) = &\bmz_2 - \bmz_1, \label{eqn:flow-2}
\end{align}
where \eqnref{eqn:flow-1} represents the shortest path in Euclidean space, \ie a straight line path, which is the optimal shortcut connecting the observed view and the completed view.
Its path length is directly determined by $\|\bmz_2 - \bmz_1\|$.

\paragraph{Vector Field Learning.}
The learning objective of flow matching is to fit a vector field $\bmv_\theta(\bmx_t, t)$ via a neural network, such that it approximates the target vector field $\bmv_{\text{target}}(\bmx_t, t)$.
The training loss uses mean squared error:
\begin{equation}\label{eqn:mean-square-error}
    \mathcal{L}_{\text{FM}} = \mathbb{E}_{t \sim \mathcal{U}(0,1), \bmz_1, \bmz_2} \left\| \bmv_\theta(\bmx_t, t) - \bmv_{\text{target}}(\bmx_t, t) \right\|_2^2,
\end{equation}
where $\mathcal{U}(0,1)$ is the uniform distribution on $[0,1]$. The vector field $\bmv_\theta(\cdot)$ takes the interpolated representation $\bmx_t$, the time step $t$, and the observed view $\bmz_1$ as input and outputs a velocity vector of the same dimension as the representation.

\paragraph{Reversibility.}
The ODE framework of flow matching possesses a natural reversibility.
If the forward vector field $\bmv_\theta(\cdot)$ from $\bmz_1$ to $\bmz_2$ is learned, the reverse vector field $\bmv_{\theta_R}(\cdot)$ from $\bmz_2$ to $\bmz_1$ is simply $-\bmv_\theta(\cdot)$.
The reverse ODE is:
\begin{equation}
\label{eqn:reverse-ode}
    \frac{d\bmx_s}{ds} =\bmv_{\theta_R}(\bmx_{s},s)= -\bmv_\theta(\bmx_{1-t}, 1-t), \quad x_s = (1-s)\bmz_2+s\bmz_1.
\end{equation}

This property grants that flow matching only needs to train the vector field in one direction to achieve bidirectional view completion, where we provide a detailed proof in~\appref{app:integral_proof}.
This is in contrast to methods using diffusion models, which require training two separate models.

\begin{figure}[tp]
\centering
\begin{tikzpicture}[scale=0.5]

\begin{scope}[local bounding box=high]
\node at (0,3.2) {\scriptsize $\sigma_t \gg \delta$};
\begin{scope}[local bounding box=high+0.1,shift ={(0,0.1)}]

\fill[oiBlue,opacity=0.3] (-0.5,0) circle (2.5);
\fill[oiVermillion,opacity=0.3] (0.5,0) circle (2.5);

\foreach \r in {0.5,1,1.5,2,2.5} {
    \draw[oiBlue,thin,opacity=0.2] (-0.5,0) circle (\r);
    \draw[oiVermillion,thin,opacity=0.2] (0.5,0) circle (\r);
}

\pgfmathsetseed{42}
\foreach \i in {1,...,30} {
    \pgfmathrandominteger{\c}{1}{2}
    \ifnum\c=1
        \pgfmathrandominteger{\r}{-180}{180}
        \pgfmathrandominteger{\d}{0}{250}
        \fill[oiBlue] ({-0.5+\d/100*cos(\r)},{0+\d/100*sin(\r)}) circle (0.05);
    \else
        \pgfmathrandominteger{\r}{-180}{180}
        \pgfmathrandominteger{\d}{0}{250}
        \fill[oiVermillion] ({0.5+\d/100*cos(\r)},{0+\d/100*sin(\r)}) circle (0.05);
    \fi
}

\fill[mBlueEmph] (-2,0) circle (0.1) node[below] {$\boldsymbol{\mu_1}$};
\fill[oiVermillionEmph] (2,0) circle (0.1) node[below] {$\boldsymbol{\mu_2}$};
\end{scope}
\node[align=center, font=\tiny] at (0,-3)
{\textbf{Fully mixed}:\\ clusters indistinguishable};
\end{scope}

\begin{scope}[local bounding box=med,shift={(8,0)}]
\node at (0,3.2) {\scriptsize $\sigma_t \approx \delta/2$};

\fill[oiBlue,opacity=0.3] (-1,0) circle (1.6);
\fill[oiVermillion,opacity=0.3] (1,0) circle (1.6);

\foreach \r in {0.8,1.2,1.6} {
    \draw[oiBlue,thin,opacity=0.3] (-1,0) circle (\r);
    \draw[oiVermillion,thin,opacity=0.3] (1,0) circle (\r);
}

\pgfmathsetseed{43}
\foreach \i in {1,...,20} {
    \pgfmathrandominteger{\c}{1}{2}
    \ifnum\c=1
        \pgfmathrandominteger{\r}{-180}{180}
        \pgfmathrandominteger{\d}{0}{160}
        \fill[oiBlue] ({-1+\d/100*cos(\r)},{0+\d/100*sin(\r)}) circle (0.05);
    \else
        \pgfmathrandominteger{\r}{-180}{180}
        \pgfmathrandominteger{\d}{0}{160}
        \fill[oiVermillion] ({1+\d/100*cos(\r)},{0+\d/100*sin(\r)}) circle (0.05);
    \fi
}

\draw[-{Stealth[length=6pt]},thick,oiBrown] (0.2,0.5) -- (0.8,0.8) node[midway,above] {\tiny \textbf{drift}};
\draw[-{Stealth[length=6pt]},thick,oiBrown] (-0.2,-0.5) -- (-0.8,-0.8);

\fill[mBlueEmph] (-1.2,0) circle (0.1) node[below] {$\boldsymbol{\mu_1}$};
\fill[oiVermillionEmph] (1.2,0) circle (0.1) node[below] {$\boldsymbol{\mu_2}$};

\node[align=center, font=\tiny] at (0,-2.9)
{\textbf{Partial overlap}:\\[-1pt] cross-cluster drift};
\end{scope}

\begin{scope}[local bounding box=low,shift={(16,0)}]
\node at (0,3.2) {\scriptsize $\sigma_t \ll \delta/2$};

\fill[oiBlue,opacity=0.25] (-1.8,0) circle (1.0);
\fill[oiVermillion,opacity=0.25] (1.8,0) circle (1.0);

\foreach \r in {0.5,0.8,1.0} {
    \draw[oiBlue,thin] (-1.8,0) circle (\r);
    \draw[oiVermillion,thin] (1.8,0) circle (\r);
}

\draw[dashed] (0,-1.5) -- (0,1.5);
\node at (0,-1.8) {\tiny decision boundary};

\pgfmathsetseed{44}
\foreach \i in {1,...,15} {
    \pgfmathrandominteger{\r}{-180}{180}
    \pgfmathrandominteger{\d}{0}{100}
    \fill[oiBlue] ({-1.8+\d/100*cos(\r)},{0+\d/100*sin(\r)}) circle (0.05);
    
    \pgfmathrandominteger{\r}{-180}{180}
    \pgfmathrandominteger{\d}{0}{100}
    \fill[oiVermillion] ({1.8+\d/100*cos(\r)},{0+\d/100*sin(\r)}) circle (0.05);
}

\fill[mBlueEmph] (-1.8,0) circle (0.1) node[below] {$\boldsymbol{\mu_1}$};
\fill[oiVermillionEmph] (1.8,0) circle (0.1) node[below] {$\boldsymbol{\mu_2}$};

\node[align=center, font=\tiny] at (0,-2.9)
{\textbf{Separable}:\\[-1pt] clear cluster structure};
\end{scope}

\node[font=\bfseries] at (0,4.2) {\scriptsize (a) High noise};
\node[font=\bfseries] at (8,4.2) {\scriptsize (b) Medium noise};
\node[font=\bfseries] at (16,4.2) {\scriptsize (c) Low noise};

\node[
    draw,
    rounded corners,
    fill=white,
    font=\scriptsize
] at (8,-4) 
{
    Cluster structure emerges iff $\sigma_t < \delta/2$
};
\end{tikzpicture}
\vspace{-7pt}

\caption{
\textbf{Cluster separability under diffusion noise.}
\textbf{(a)} High noise ($\sigma_t \gg \delta$): supports fully overlap, samples mix heavily;
\textbf{(b)} Medium noise ($\sigma_t \approx \delta/2$): partial overlap may cause cross-cluster drift;
\textbf{(c)} Low noise ($\sigma_t \ll \delta/2$): clusters are well separated with a clear boundary.
In finite-step inference, $\sigma_t$ is often large due to limited steps, making scenario (b) common and leading to cross-cluster drift.
We refer readers to \appref{sec:conditional-diffusion-process}, \eqnref{eqn:property3_pre1} for detailed analysis.
}
\label{fig:diffusion-seperate}
\end{figure}

\subsubsection{Flow Matching in IMVC.}\label{sec:benefits-of-flow-matching}

With the preliminaries of flow matching introduced above, we argue that it is structurally more aligned to the task of IMVC, compared to existing diffusion-based methods.
We concretely demonstrate our claim through the two corollaries below, while relating each one to the limitations of the diffusion models.
A complete theoretical analysis regarding the properties and limitations of the diffusion model is provided in~\appref{sec:mathematical-diffusion}, detailed proofs of the corollaries are in~\appref{sec:prove-corollary}.

\begin{corollary}[Flow Matching's Shortcut Path]\label{corollary_2}
Flow matching constructs an optimal shortcut path between views representation.

\end{corollary}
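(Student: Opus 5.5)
The plan is to make ``optimal shortcut path'' precise and then prove it in two parts. First, the interpolation path of \eqnref{eqn:flow-1} has minimal Euclidean length among all admissible paths joining $\bmz_1$ to $\bmz_2$. Second, the ODE driven by the learned field $\bmv_\theta$ follows this path, up to the error allowed by Assumption~\ref{assump:perfect_approx}.

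Concretely, let $\gamma:[0,1]\to\mathbb{R}^d$ be any absolutely continuous path with $\gamma(0)=\bmz_1$ and $\gamma(1)=\bmz_2$. By the fundamental theorem of calculus and the triangle inequality for Bochner integrals,
\[
\|\bmz_2-\bmz_1\| = \Bigl\|\int_0^1 \dot\gamma(t)\,dt\Bigr\| \le \int_0^1\|\dot\gamma(t)\|\,dt = L(\gamma).
\]
For $\bmx_t$ in \eqnref{eqn:flow-1} the velocity is constant, equal to $\bmz_2-\bmz_1$ by \eqnref{eqn:flow-2}, so $L(\bmx)=\|\bmz_2-\bmz_1\|$ and equality holds. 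The same computation with Cauchy--Schwarz gives a stronger statement for the kinetic energy: $\int_0^1\|\dot\gamma\|^2\,dt \ge \|\bmz_2-\bmz_1\|^2$, with equality only for the constant-speed straight line. So the straight path is optimal for both length and energy, and it is the unique energy minimizer. I will state this energy form because it is the quantity the MSE objective \eqnref{eqn:mean-square-error} actually targets.

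Next I connect the learned flow to this path. For fixed $(\bmz_1,\bmz_2)$, with $\bmv_\theta$ conditioned on $\bmz_1$ as described, the regression target at $\bmx_t$ is the constant $\bmz_2-\bmz_1$. Under Assumption~\ref{assump:perfect_approx}, $\bmv_\theta$ is uniformly $\epsilon$-close to this target along the path, and I will additionally assume it is $L$-Lipschitz in $\bmx$. Gr\"onwall's inequality applied to $\frac{d}{dt}(\hat\bmx_t-\bmx_t)$ then gives $\sup_t\|\hat\bmx_t-\bmx_t\|\le \epsilon(e^{L}-1)/L$. In the exact case $\epsilon=0$, the ODE solution coincides with the straight line, and a single Euler step is exact.

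The main obstacle is the identification step. In the true marginal setting, the optimal $\bmv_\theta$ equals the conditional expectation $\mathbb{E}[\bmz_2-\bmz_1\mid \bmx_t,\bmz_1]$, which is not literally the pairwise velocity if several $\bmz_2$ are compatible with the same $\bmz_1$. I would handle this by reading Assumption~\ref{assump:perfect_approx} as approximating the conditional (paired) field, consistent with the conditioning on $\bmz_1$. I would also remark that Assumptions~\ref{assump:separable_representation}--\ref{assump:view_consistency} make this reasonable, since the paired target lies in the same cluster. As a side consequence, the straight segment lies inside the convex hull of $\mathcal{S}_{c(\bmz_1)}$, which sets up the subsequent separability corollary.
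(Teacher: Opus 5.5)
Your first half is the paper's own argument, made rigorous. The paper's proof only notes that the flow starts at $\bmx_0=\bmz_1\in\mathcal{S}_{c(\bmz_1)}$ rather than at cluster-agnostic noise, and asserts that the segment in \eqnref{eqn:flow-1} is the Euclidean shortest path, of length $\|\bmz_2-\bmz_1\|$. Your triangle-inequality bound supplies the justification it omits, and the energy bound adds something new: it singles out the constant-speed parametrization uniquely. Your second half goes beyond the paper. The paper's corollary concerns only the \emph{constructed} interpolation and never shows that the learned field actually follows it. Your Gr\"onwall estimate $\epsilon(e^{L}-1)/L$ covers what is integrated at inference, and shows why one Euler step suffices when $\epsilon=0$. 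The cost is a Lipschitz hypothesis and a pairwise reading of Assumption~\ref{assump:perfect_approx}, neither of which the paper states. Your diagnosis of why that reading is needed is slightly off. Because $\bmv_\theta$ is conditioned on $\bmz_1$, for $t>0$ one has $\bmz_2=(\bmx_t-(1-t)\bmz_1)/t$. Hence $\mathbb{E}[\bmz_2-\bmz_1\mid\bmx_t,t,\bmz_1]=(\bmx_t-\bmz_1)/t$, which equals the pairwise velocity on the path even when several $\bmz_2$ share the same $\bmz_1$. The real obstruction is at $t=0$. This ideal field has Lipschitz constant $1/t$, and its ODE from $\bmz_1$ has the non-unique solutions $\bmz_1+t\boldsymbol{c}$. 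Moreover, $\epsilon$-closeness at $(\bmz_1,0)$ for every $\bmz_2$ paired with $\bmz_1$ forces all such $\bmz_2$ to lie within $2\epsilon$ of one another. So your assumption is in effect near-deterministic completion, $\bmz_2\approx f(\bmz_1)$, and you should state it that way. Finally, your convex-hull remark is true but does not by itself set up Corollary~\ref{corollary_3}. Convex hulls of $\delta$-separated compact sets can intersect (take a circle and its center), so straight segments from different clusters need not stay apart.
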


As illustrated in~\figref{fig:fm_vs_diffusion_paths}, during the training process of IMVC, view representations gradually align as $\|\bmz_1 - \bmz_2\|$ decreases. 
The path length in flow matching shortens adaptively. When views are perfectly aligned, the path length becomes a small value.
On the other hand, the diffusion model's starting point is global Gaussian noise unrelated to the cluster class-conditional data distributions, which encode the clustering structure (\appref{sec:conditional-diffusion-process}, \eqnref{eqn:property2}). The reverse process must complete the full evolution of ``noise space $\Rightarrow$ representation space $\Rightarrow$ cluster-specific distribution''.
Even if the observed view representation and completion target view representation in latent space are highly aligned, diffusion cannot utilize the view consistency information.
The path length is fixed as the distance from the noise to the cluster set. 

\begin{remark}[Shortcut Path Advantage]
    Compared to the diffusion model's Gaussian noise initialization (\eqnref{eqn:property2}), flow matching offers the theoretical advantages of non-redundant paths and adaptively shortening path length, as shown in~\figref{fig:fm_vs_diffusion_paths}.
\end{remark}

\begin{corollary}[Flow Matching's Finite-Step Separability]\label{corollary_3}
Flow matching achieves cluster separability within any finite number of steps.
\end{corollary}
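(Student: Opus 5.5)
The plan is to make the informal claim precise under Assumptions \ref{assump:separable_representation}--\ref{assump:perfect_approx} and prove it by an explicit Euler-discretization argument. Interpret ``cluster separability within any finite number of steps'' as follows. Fix $N\ge 1$ and step size $h=1/N$. Run the explicit Euler scheme
\begin{equation*}
\hat\bmx_{n+1}=\hat\bmx_n+h\,\bmv_\theta(\hat\bmx_n,nh),\qquad \hat\bmx_0=\bmz_1 .
\end{equation*}
Then the terminal iterate $\hat\bmx_N$ lies within distance $<\delta/2$ of $\mathcal{S}_{c(\bmz_1)}$. Hence it is assigned (nearest-set rule) to the same cluster as $\bmz_1$. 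The contrast with diffusion (\eqnref{eqn:property3_pre1}) is that no residual noise level $\sigma_t$ needs to be driven below $\delta/2$, so separability does not require the number of steps to grow.

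First, I will analyze the exact target field. Along the conditional path \eqnref{eqn:flow-1}, the target velocity \eqnref{eqn:flow-2} is the constant $\bmz_2-\bmz_1$, which does not depend on $t$. Euler integration of a constant field is exact for every $N$: $\sum_{n<N} h(\bmz_2-\bmz_1)=\bmz_2-\bmz_1$, so the ideal discrete trajectory lands exactly at $\bmz_2$. By Assumption \ref{assump:view_consistency}, $c(\bmz_2)=c(\bmz_1)$. Moreover, every intermediate iterate $\bmx_{nh}=(1-nh)\bmz_1+nh\,\bmz_2$ lies on the segment between them. Because the trajectory is deterministic and anchored at $\bmz_1$, no exploration of the $\delta$-gap between clusters is forced.

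Second, I will account for the learned field. By Assumption \ref{assump:perfect_approx}, take $\sup_{\bmx,t}\|\bmv_\theta(\bmx,t)-\bmv_{\text{target}}\|\le\varepsilon$, with $\bmv_\theta$ conditioned on $\bmz_1$ so the constant target is well defined. Since the error enters additively with weight $h$, a telescoping sum gives
\begin{equation*}
\|\hat\bmx_N-\bmz_2\|\le \sum_{n<N}h\,\varepsilon=\varepsilon,
\end{equation*}
independent of $N$. No Grönwall factor arises, because the target does not depend on $\bmx$. Choosing $\varepsilon<\delta/2$, Assumption \ref{assump:separable_representation} ensures the $\delta/2$-neighbourhoods of distinct compact sets $\mathcal{S}_k$ are disjoint. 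Therefore $\hat\bmx_N$ is closer to $\mathcal{S}_{c(\bmz_2)}$ than to any other cluster, which proves the corollary for every finite $N$.

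The main obstacle is the gap between the conditional target $\bmz_2-\bmz_1$ and the marginal flow-matching field. The marginal field is a conditional expectation and could average velocities from different pairs, so it would no longer be constant along trajectories. I would handle this with a two-part argument. Under Assumption \ref{assump:view_consistency}, any pair passing through $\bmx_t$ shares a cluster, and the relevant $\bmz_2$ all lie in the same $\mathcal{S}_k$. I would therefore bound the deviation of the marginal field from the per-sample constant by the cluster diameter. I would then either absorb this deviation into $\varepsilon$ or, more cleanly, invoke conditioning on $\bmz_1$ (as in the architecture described above) to argue that the regression target is exactly the conditional one. If this step fails, the fallback is a weaker statement: the iterate stays within the convex hull of $\mathcal{S}_k$ enlarged by $\varepsilon$. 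This requires adding convexity of clusters as a mild extra assumption.
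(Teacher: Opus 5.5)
Your argument is valid under the idealization you adopt, and it takes a genuinely different route from the paper's. The paper stays in continuous time. It notes that $\bmz_2-\bmz_1$ joins two points of $\mathcal{S}_{c(\bmz_1)}$, invokes ODE uniqueness to say trajectories from different clusters never meet, and asserts $(1-t)\bmz_1^i+t\bmz_2^i\in\mathcal{S}_i$ for all $t\in[0,1]$. It then simply declares that separation persists for any finite number of steps. You instead analyze the discrete scheme: Euler is exact on the constant target, the error telescopes to the $N$-independent bound $\varepsilon$ with no Gr\"onwall factor, and disjoint $\delta/2$-neighbourhoods give the correct nearest-set assignment.

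What your route buys is that it actually addresses finitely many steps and uses Assumption (i) only as stated. The paper's claim for every $t$ tacitly needs each $\mathcal{S}_k$ to be convex, since a chord of a nonconvex compact set can leave it and even cross another cluster. Also, once trajectories sit in disjoint sets, the appeal to uniqueness adds nothing. What the paper aims at is a trajectory-wise statement, while you certify only the endpoint. However, your telescoping already gives $\|\hat\bmx_n-((1-nh)\bmz_1+nh\,\bmz_2)\|\le nh\,\varepsilon$ at every iterate, so adding convexity recovers the paper's stronger claim.

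Both proofs rest on the same idealization: the learned field reproduces the per-pair velocity $\bmz_2-\bmz_1$, exactly in the paper and within $\varepsilon$ in yours. Since $\bmz_1\mapsto\hat\bmx_N$ is deterministic, this forces the law of $\bmz_2$ given $\bmz_1$ into a ball of radius $\varepsilon$. Conditioning on $\bmz_1$ does not supply that. Absorbing a cluster-diameter term into $\varepsilon$ also fails, because nothing makes $\mathrm{diam}\,\mathcal{S}_k<\delta/2$.

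Your convexity fallback is the right repair, and it is short. Because $Nh=1$, $\hat\bmx_N=\frac1N\sum_{n<N}\bigl(\bmz_1+\bmv_\theta(\hat\bmx_n,nh)\bigr)$ is an average. So if each $\bmz_1+\bmv_\theta(\hat\bmx_n,nh)$ lies within $\varepsilon$ of a convex $\mathcal{S}_{c(\bmz_1)}$, then so does $\hat\bmx_N$. That hypothesis is the paper's `cluster-guiding' property made quantitative.
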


In other words, sample trajectories from different cluster sets remain separated throughout the evolution, with no cross-cluster drift, as in~\figref{fig:diffusion-seperate}~\textbf{(c)}.
In contrast, the cluster separability of the diffusion model relies on a strong asymptotic condition (where $\sigma_t < \delta/2$), which only holds as $t \to 0$, as shown in~\figref{fig:diffusion-seperate} and is discussed in detail in~\appref{sec:conditional-diffusion-process}, \eqnref{eqn:property3_pre1}.
This condition is difficult to satisfy in finite-step practical inference, leading to overlapping supports of noised distributions for different clusters and making samples prone to cross-cluster drift, significantly reducing cluster separability.

\begin{remark}[Finite-Step Separability Advantage]
    Flow matching offers the theoretical benefit of finite-step cluster separability with no cross-cluster drift compared to the diffusion model's asymptotic separability condition (\figref{fig:diffusion-seperate}, \eqnref{eqn:property3_pre1}).
\end{remark}

\subsection{Flow-Matching Formulation of IMVC}\label{sec:flow-based-method}

Motivated by the theoretical analysis in~\secref{sec:theoretical-analysis}, we formulate incomplete multi-view clustering as a cross-view completion problem in the latent representation space. Under the assumptions of cluster separability, view consistency, and sufficient model approximation, our method is designed to preserve clustering structure while recovering missing-view representations. The overall framework consists of three components: within-view reconstruction, cross-view alignment (including cluster-level and entropic alignment), and straight-path completion.

\begin{figure}[tp]
    \centering
    \includegraphics[width=1\linewidth]{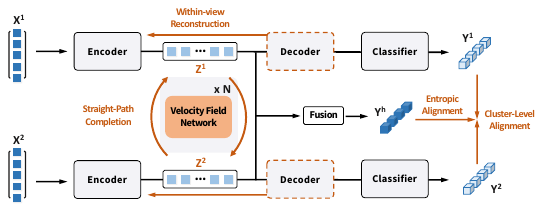}
\caption{
\textbf{Overview of the proposed flow-matching framework for incomplete multi-view clustering.}
Each view is encoded into latent representations and reconstructed via decoders.
A velocity field network models straight-path transport between paired latent for cross-view completion.
The completed representations are fused for cluster prediction,
followed by cluster-level alignment and entropic alignment.
}
    \label{fig:main_fig}
\end{figure}

\subsubsection{Within-view Reconstruction.}
Before performing cross-view transport, we first learn compact latent representations for each view via view-specific autoencoders.
Given an input feature $\bmx_v \in X$ from view $v \in \{1,2\}$, an encoder $\mathcal{E}_v(\cdot)$ maps it to a latent representation $\bmz_v = \mathcal{E}_v(\bmx_v) \in Z$, and a decoder $\mathcal{D}_v(\cdot)$ reconstructs it as $\hat{\bmx}_v = \mathcal{D}_v(\bmz_v)$.
This step extracts view-specific structural information and learns latent representations for each view.
To preserve the intrinsic structure of each view in the latent space, we adopt a standard reconstruction objective in $N$ samples:
\begin{equation}
\mathcal{L}_{\text{rec}} = \frac{1}{N} \sum_{i=1}^N \left( \| \mathcal{D}_1(\mathcal{E}_1(\boldsymbol{x}_1^{(i)})) - \boldsymbol{x}_1^{(i)} \|^2 + \| \mathcal{D}_2(\mathcal{E}_2(\boldsymbol{x}_2^{(i)})) - \boldsymbol{x}_2^{(i)} \|^2 \right),
\end{equation}

\subsubsection{Cluster-Level Alignment.}

Under Assumptions~\ref{assump:separable_representation}–\ref{assump:view_consistency}, different views of the same sample are expected to belong to the same cluster and therefore share a consistent cluster assignment. However, in practice, the view-specific encoders and the learned clustering head can yield view-dependent predictions, leading to inconsistent assignments across views and degrading the clustering structure of the learned representations. We therefore employ a cluster-level contrastive objective to explicitly couple the two views in the clustering space.

Specifically, let $\bmy_1,\bmy_2\in\mathbb{R}^{K}$ be the cluster probability vectors 
predicted from $\bmz_1$ and $\bmz_2$, respectively.
We adopt an InfoNCE \cite{oord2018representation} objective to align cross-view cluster predictions in $N$ samples:
\begin{equation}
\mathcal{L}_{\text{cluster}}
=
-\frac{1}{N}
\sum_{i=1}^{N}
\log
\frac{\exp\!\left(\cos(\bmy_1^{(i)}, \bmy_2^{(i)})/\tau\right)}
{\sum_{j=1}^{N}
\exp\!\left(\cos(\bmy_1^{(i)}, \bmy_2^{(j)})/\tau\right)},
\end{equation}
where $\cos(\cdot,\cdot)$ denotes cosine similarity, 
and $\tau$ is a temperature parameter.
This objective increases the similarity between corresponding cross-view predictions 
while discouraging alignment with non-matching samples, 
thereby promoting view-consistent clustering and preserving inter-cluster separation.

\subsubsection{Entropic Alignment.}\label{sec:entropic-alignment}
Although the cluster-level contrastive objective promotes cross-view assignment consistency, it operates at the sample level and does not explicitly regularize the structure of the predicted probability distributions.
In practice, direct fusion in Euclidean space may distort the relative probability structure and over-smooth cluster confidence.
We therefore employ an entropy-weighted alignment objective 
to regularize probability consistency across views.

Let $\bmy_1$, $\bmy_2$, and $\bmy_h$ denote the cluster probability distributions 
predicted from $\bmz_1$, $\bmz_2$, and the fused representation $\bmh$, respectively.
Since these distributions lie on the probability simplex, 
we adopt the centered log-ratio (clr) transformation (\appref{sec:clr-proof})
to perform fusion in a simplex-consistent space.
An entropy-based weighting scheme assigns larger weights to more confident predictions, 
and the target distribution $\bmy^*$ is obtained via a weighted combination in the clr space 
followed by inverse mapping back to the simplex.
We then enforce consistency between the fused prediction and the entropy-weighted target via KL divergence:
\begin{equation}
\mathcal{L}_{\text{entropy}}
=
\mathrm{KL}(\bmy_h \,\|\, \bmy^*),
\end{equation}
where $\KL{\cdot}{\cdot}$ is the KL divergence. This objective stabilizes cross-view probability structure 
while preventing uncertain predictions from dominating the fusion, 
thereby preserving post-alignment cluster separability.

\subsubsection{Straight-Path Completion.}

With cluster-level and entropic alignment regularizing cross-view consistency in the clustering space,
we now realize cross-view representation completion via flow matching in the latent space.
Given paired latent representations $(\bmz_1,\bmz_2)$ from complete samples,
we construct the interpolated state $\bmx_t$ along the straight-line path (\eqnref{eqn:flow-1}).
A neural vector field $\mathcal{F}(\cdot)$ is trained to approximate the target transport direction,
so that integrating the learned ODE maps an observed representation to its missing-view counterpart.

\paragraph{Flow Prediction Loss.}

Given the straight-line interpolation $\bmx_t$ defined in~\eqnref{eqn:flow-1}, the target transport direction at any intermediate state is uniquely determined as $\bmz_2-\bmz_1$ (see~\eqnref{eqn:flow-2}). 
We therefore learn a parametric vector field $\mathcal{F}(\bmx_t,t)$ to approximate this target velocity by minimizing
\begin{equation}
\mathcal{L}_{\text{flow-pred}}
=
\mathbb{E}_{t,\bmz_1,\bmz_2}
\left\|
\mathcal{F}(\bmx_t,t)
-
(\bmz_2-\bmz_1)
\right\|_2^2 .
\end{equation}

This objective enforces the learned vector field to follow the straight-line shortcut path
between paired latent representations,
as discussed in Corollary~\ref{corollary_2}.
By aligning the local velocity with $\bmz_2-\bmz_1$, the transport process exploits view consistency and adaptively shortens the path length as cross-view representations become aligned.
This avoids redundant detours and maintains the cluster structure established by the alignment modules.

\paragraph{Completion Consistency Loss.}

Although the vector field is trained to approximate the shortcut transport direction,
finite-step numerical integration may introduce approximation errors.
Based on the reversibility property of flow matching (Eq.~\ref{eqn:reverse-ode}),
forward integration ($t:0\rightarrow1$) maps $\bmz_1$ to the recovered feature $\hat{\bmz}_2$,
while reverse integration ($s:0\rightarrow1, s = 1-t$) maps $\bmz_2$ to $\hat{\bmz}_1$:
\begin{align}
\hat{\bmz}_2 &= \bmz_1 + \int_{0}^{1} 
\mathcal{F}\!\left(\bmx_t, t\right)\, \mathrm{d}t, \\
\hat{\bmz}_1 &= \bmz_2 + \int_{0}^{1} 
-\mathcal{F}\!\left(\bmx_{1-t}, 1-t\right)\, \mathrm{d}s.
\end{align}
We then enforce bidirectional completion consistency by minimizing
\begin{equation}
\mathcal{L}_{\text{flow-comp}}
=
\mathbb{E}_{\bmz_1,\bmz_2}
\left[
\|\hat{\bmz}_2 - \bmz_2\|_2^2
+
\|\hat{\bmz}_1 - \bmz_1\|_2^2
\right].
\end{equation}

This loss constrains the transport trajectory at both endpoints, ensuring that finite-step evolution remains faithful to the true latent representations.
Together with the shortcut-path property, it supports stable cross-view completion while maintaining the cluster structure discussed in Corollary~\ref{corollary_3}.

\subsubsection{Overall Objective.}

The model is trained by minimizing a weighted combination of the losses introduced above,
which jointly optimize representation reconstruction,
cross-view cluster consistency,
probability-level alignment,
and shortcut-path completion:
\begin{equation}
\mathcal{L}
=
\mathcal{L}_{\text{rec}}
+
\lambda_1 \mathcal{L}_{\text{flow-pred}}
+
\lambda_2 \mathcal{L}_{\text{flow-comp}}
+
\lambda_3 \mathcal{L}_{\text{cluster}}
+
\lambda_4 \mathcal{L}_{\text{entropy}},
\end{equation}
where $\lambda_1,\lambda_2,\lambda_3,$ and $\lambda_4$ balance the contributions of the transport, alignment, and reconstruction objectives.

\section{Experiment}
\label{sec:experiment}

This section presents the experimental evaluation of the proposed framework. 
We first describe the experimental settings (\secref{sec:settings}),
and compare our method with recent incomplete multi-view clustering approaches on multiple benchmarks (\secref{sec:comparison}). 
Then, we conduct ablation studies to analyze the contribution of each component (\secref{sec:ablation}). 
Finally, we provide additional analyses, including hyperparameter sensitivity, representation visualization, and flow-step behavior, to further understand the properties of the proposed method (\secref{sec:analysis}).

\subsection{Experiment Setup}
\label{sec:settings}

\mypara{Dataset.}
We evaluate the proposed method on five multi-view benchmarks: 
\textit{(i)} Synthetic3D~\cite{kumar2011co}, a synthetic Gaussian-mixture dataset with correlated features, where two views are selected following~\cite{zhang2025incomplete}; 
\textit{(ii)}  CUB~\cite{wah2011caltech}, a fine-grained bird dataset containing 11,788 image–text pairs from 200 categories, where visual features are extracted using GoogLeNet and textual features via doc2vec, and the first 10 categories are used as in~\cite{zhang2025incomplete,li2025community}; 
\textit{(iii)}  HandWritten~\cite{asuncion2007uci}, a digit dataset with 2,000 samples and six feature views, from which two views are adopted following~\cite{zhang2025incomplete}; 
\textit{(iv)}  LandUse-21~\cite{yang2010bag}, a remote-sensing dataset with 2,100 images from 21 categories, where PHOG and LBP descriptors are used as two views; 
and \textit{(v)}  Fashion~\cite{xiao2017fashion}, a product image dataset with 10,000 samples, from which two visual feature views are selected.

\mypara{Metric.}
Following the standard incomplete multi-view clustering (IMVC) protocol~\cite{lin2021completer,zhang2025incomplete}, we simulate missing views by randomly masking one view for a subset of samples. The missing rate is defined as the ratio of incomplete samples to the total dataset size.
We follow~\cite{lin2021completer,zhang2025incomplete} to evaluate method performance using three standard metrics: Accuracy (ACC), Normalized Mutual Information (NMI), and Adjusted Rand Index (ARI).

\mypara{Implementation details.}
Our framework consists of view-specific autoencoders, a straight-path flow matching module, and a clustering classifier.
The model is trained for $200$ epochs using the Adam optimizer with an initial learning rate of $1 \times 10^{-3}$. 
The batch size is set according to the dataset configuration (typically $128$ or $256$).
We simulate incomplete multi-view scenarios by randomly masking one view for a subset of samples based on the target missing rate ($\rho \in \{0.1, 0.3, 0.5\}$). 
The weighting coefficients $\lambda_1, \lambda_2, \lambda_3,$ and $\lambda_4$ are configured individually for each dataset.
For the flow matching process, we adopt an efficient setting with a single integration step ($Step=1$) during both training and inference.
We refer readers to \appref{sec:experimental-supplement} for detailed implementation details.

\begin{table}[htbp]
\centering
\caption{
\textbf{Comparison with state-of-the-art incomplete multi-view clustering (IMVC) methods under different missing rates.}
\textbf{ACC}, \textbf{NMI}, and \textbf{ARI} denote clustering Accuracy, Normalized Mutual Information, and Adjusted Rand Index, respectively (higher is better).
$\tau$ indicates the missing rate of views.
}
\label{tab:performance}
\begin{adjustbox}{scale=0.8}
\begin{tabular*}{1.23\textwidth}{@{\extracolsep{\fill}} llccccccccc}
\toprule
\multirow{2}{*}{\textbf{Dataset}} & \multirow{2}{*}{\textbf{Method}} & \multicolumn{3}{c}{\textbf{$\tau=0.1$}} & \multicolumn{3}{c}{\textbf{$\tau=0.3$}} & \multicolumn{3}{c}{\textbf{$\tau=0.5$}} \\
\cmidrule(lr){3-5} \cmidrule(lr){6-8} \cmidrule(lr){9-11}
& & \textbf{ACC} & \textbf{NMI} & \textbf{ARI} & \textbf{ACC} & \textbf{NMI} & \textbf{ARI} & \textbf{ACC} & \textbf{NMI} & \textbf{ARI} \\
\midrule
\multirow{10}{*}{\textbf{Synthetic3d}} 
& DCP\cite{lin2022dual} & 88.00 & 65.17 & 67.87 & 79.83 & 55.21 & 56.69 & 85.53 & 58.33 & 62.56 \\
& DSIMVC\cite{tang2022deep} & 73.11 & 59.02 & 58.38 & 70.78 & 56.41 & 55.43 & 67.44 & 51.74 & 49.42 \\
& GCFAGG\cite{yan2023gcfagg}  & 72.52 & 57.87 & 56.68 & 70.23 & 55.51 & 53.34 & 69.12 & 53.21 & 52.03 \\
& CPSPAN\cite{jin2023deep}  & 88.83 & 65.51 & 69.12 & 87.50 & 61.79 & 66.68 & 86.33 & 59.29 & 63.65 \\
& APADC\cite{xu2023adaptive} & 85.73 & 59.07 & 62.98 & 86.47 & 60.84 & 64.90 & 84.38 & 58.48 & 61.85 \\
& ProImp\cite{li2023incomplete}  & 86.55 & 61.79 & 65.42 & 85.50 & 59.29 & 62.81 & 82.78 & 54.76 & 57.69 \\
& DVIMVC\cite{xu2024deep} & 50.03 & 28.43 & 24.57 & 46.97 & 22.07 & 15.93 & 50.32 & 25.64 & 18.79 \\
& ICMVC\cite{chao2024incomplete} & 85.03 & 58.30 & 61.96 & 87.17 & 61.20 & 66.29 & 84.20 & 55.33 & 59.60 \\
& DCG\cite{zhang2025incomplete} & 91.23 & 71.35 & 76.05 & 88.00 & 64.11 & 68.48 & 87.67 & 63.23 & 67.52 \\
& \textbf{Ours} & \textbf{94.17} & \textbf{78.66} & \textbf{83.51} & \textbf{91.17} & \textbf{69.56} & \textbf{75.56} & \textbf{88.17} & \textbf{63.78} & \textbf{68.26} \\
\midrule
\multirow{10}{*}{\textbf{CUB}} 
& DCP\cite{lin2022dual} & 42.77 & 55.42 & 33.39 & 40.60 & 52.37 & 31.41 & 38.87 & 50.15 & 31.18 \\
& DSIMVC\cite{tang2022deep} & 63.67 & 59.85 & 46.23 & 49.22 & 49.93 & 32.80 & 48.99 & 49.61 & 32.35 \\
& GCFAGG\cite{yan2023gcfagg} & 67.67 & 64.14 & 51.14 & 62.72 & 60.27 & 45.03 & 59.63 & 55.95 & 39.91 \\
& CPSPAN\cite{jin2023deep} & 76.67 & 71.38 & 58.65 & 74.33 & 70.33 & 56.84 & 73.33 & 69.68 & 57.17 \\
& APADC\cite{xu2023adaptive} & 53.04 & 59.05 & 42.98 & 70.33 & 70.33 & 56.84 & 48.53 & 59.93 & 41.54 \\
& ProImp\cite{li2023incomplete} & 69.33 & 69.65 & 55.93 & 71.83 & 64.45 & 51.64 & 69.56 & 64.19 & 52.06 \\
& DVIMVC\cite{xu2024deep} & 63.83 & 63.00 & 50.50 & 70.42 & 68.43 & 56.35 & 68.98 & 68.81 & 52.61 \\
& ICMVC\cite{chao2024incomplete} & 32.33 & 33.36 & 16.85 & 45.13 & 41.32 & 25.11 & 41.70 & 37.78 & 21.62 \\
& DCG\cite{zhang2025incomplete} & 82.23 & 77.70 & 69.21 & 77.17 & 71.35 & 59.85 & 75.50 & \textbf{72.21} & 59.12 \\
& \textbf{Ours} & \textbf{86.00} & \textbf{78.19} & \textbf{72.28} & \textbf{82.67} & \textbf{72.80} & \textbf{66.31} & \textbf{80.50} & 71.74 & \textbf{63.15} \\
\midrule
\multirow{10}{*}{\textbf{HandWritten}} 
& DCP\cite{lin2022dual} & 63.66 & 70.44 & 52.82 & 75.68 & 79.05 & 67.45 & 72.07 & 76.17 & 63.81 \\
& DSIMVC\cite{tang2022deep} & 71.37 & 70.24 & 60.30 & 67.47 & 65.27 & 54.26 & 56.90 & 57.99 & 44.12 \\
& GCFAGG\cite{yan2023gcfagg} & 75.07 & 68.19 & 59.79 & 69.16 & 63.25 & 53.67 & 61.02 & 53.57 & 42.12 \\
& CPSPAN\cite{jin2023deep} & 68.70 & 69.06 & 59.17 & 82.20 & 75.05 & 69.58 & 72.55 & 68.60 & 57.51 \\
& APADC\cite{xu2023adaptive} & 72.58 & 71.29 & 59.07 & 60.66 & 64.84 & 48.75 & 52.70 & 63.12 & 39.96 \\
& ProImp\cite{li2023incomplete} & 81.05 & 78.48 & 70.40 & 80.60 & 77.26 & 69.72 & 78.98 & 74.08 & 66.47 \\
& DVIMVC\cite{xu2024deep} & 29.08 & 19.36 & 11.06 & 26.63 & 16.57 & 8.84 & 25.13 & 15.69 & 8.09 \\
& ICMVC\cite{chao2024incomplete} & 82.70 & 81.06 & 74.67 & 82.27 & 79.90 & 72.81 & 75.04 & 71.89 & 63.29 \\
& DCG\cite{zhang2025incomplete} & 82.75 & 82.63 & 74.88 & 82.70 & 80.54 & 73.96 & 80.80 & 76.21 & 70.45 \\
& \textbf{Ours} & \textbf{85.30} & \textbf{82.78} & \textbf{76.72} & \textbf{86.50} & \textbf{82.09} & \textbf{77.04} & \textbf{83.40} & \textbf{76.26} & \textbf{70.76} \\
\midrule
\multirow{10}{*}{\textbf{LandUse-21}} 
& DCP\cite{lin2022dual} & 26.19 & 31.20 & 13.02 & 25.48 & 30.18 & 11.08 & 21.52 & 26.32 & 11.11 \\
& DSIMVC\cite{tang2022deep} & 16.56 & 16.40 & 4.32 & 16.46 & 16.57 & 4.35 & 16.70 & 16.84 & 4.50 \\
& GCFAGG\cite{yan2023gcfagg} & 19.05 & 19.99 & 6.32 & 18.53 & 19.89 & 6.24 & 18.43 & 19.47 & 5.98 \\
& CPSPAN\cite{jin2023deep} & 20.05 & 27.20 & 8.14 & 18.05 & 25.75 & 6.90 & 20.38 & 26.99 & 8.05 \\
& APADC\cite{xu2023adaptive} & 20.92 & 26.74 & 7.97 & 19.99 & 24.40 & 7.44 & 19.15 & 23.91 & 7.08 \\
& ProImp\cite{li2023incomplete} & 24.43 & 29.22 & 11.44 & 24.63 & 28.45 & 11.78 & 24.45 & 27.43 & 10.94 \\
& DVIMVC\cite{xu2024deep} & 13.90 & 22.48 & 2.66 & 13.24 & 23.21 & 2.74 & 12.86 & 19.67 & 2.54 \\
& ICMVC\cite{chao2024incomplete} & 26.81 & 30.71 & 13.60 & 26.12 & 29.83 & 12.12 & 26.12 & 29.83 & 12.12 \\
& DCG\cite{zhang2025incomplete} & 27.52 & 31.36 & 14.57 & 27.33 &  \textbf{32.09} & 14.47 & 25.76 & 29.07 & 13.17 \\
& \textbf{Ours} & \textbf{29.10} & \textbf{31.95} & \textbf{15.15} & \textbf{28.19} & 31.71 & \textbf{14.55} & \textbf{27.24} & \textbf{30.84} & \textbf{13.66} \\
\midrule
\multirow{10}{*}{\textbf{Fashion}} 
& DCP\cite{lin2022dual} & 83.70 & 84.30 & 76.50 & 71.80 & 70.90 & 52.50 & 60.80 & 59.50 & 33.10 \\
& DSIMVC\cite{tang2022deep} & 88.00 & 86.40 & 81.10 & 87.30 & 85.00 & 78.90 & 83.50 & 80.30 & 73.70 \\
& GCFAGG\cite{yan2023gcfagg} & 78.21 & 74.50 & 66.28 & 76.34 & 72.53 & 63.98 & 74.47 & 69.83 & 60.37 \\
& CPSPAN\cite{jin2023deep} & 66.16 & 68.45 & 55.73 & 64.80 & 68.22 & 55.55 & 54.81 & 64.13 & 48.84 \\
& APADC\cite{xu2023adaptive} & 81.40 & 86.50 & 73.30 & 80.90 & 85.01 & 73.10 & 75.40 & 81.50 & 67.60 \\
& ProImp\cite{li2023incomplete} & 92.88 & 88.34 & 86.09 & 74.11 & 76.97 & 66.42 & 89.76 & 81.94 & 79.93 \\
& DVIMVC\cite{xu2024deep} & 79.38 & 80.22 & 71.50 & 82.26 & 79.82 & 72.46 & 80.17 & 77.09 & 69.51 \\
& ICMVC\cite{chao2024incomplete} & 92.41 & 87.05 & 85.11 & 89.31 & 83.06 & 79.93 & 79.37 & 74.44 & 68.46 \\
& DCG\cite{zhang2025incomplete} & 95.83 & 91.29 & 91.19 & 93.13 & 86.99 & 86.00 & 90.04 & 82.25 & 79.99 \\
& \textbf{Ours} & \textbf{96.07} & \textbf{91.99} & \textbf{91.78} & \textbf{93.66} & \textbf{87.93} & \textbf{87.07} & \textbf{90.09} & \textbf{82.83} & \textbf{80.58} \\
\bottomrule
\end{tabular*}
\end{adjustbox}
\end{table}

\subsection{Comparisons with State-of-the-Arts.}
\label{sec:comparison}

Table~\ref{tab:performance} compares our method with recent incomplete multi-view clustering approaches under three missing rates on five benchmark datasets. 
Overall, the proposed framework consistently achieves the best or competitive performance across almost all evaluation metrics.
On datasets with clear cluster structures, such as Synthetic3D and HandWritten, our method achieves notable improvements across all metrics. 
These results suggest that the straight-path completion learned by flow matching can recover missing representations while maintaining the underlying cluster geometry.

The advantage of our method becomes more pronounced under higher missing rates. 
As $\tau$ increases from 0.1 to 0.5, many existing methods experience notable performance degradation due to reduced cross-view correspondence, whereas our approach remains relatively stable across datasets. 
This robustness arises from the combination of alignment and flow-based completion: the cluster-level and entropic alignment modules enforce view-consistent assignments, while the flow-matching transport recovers missing representations by following the straight-path transport between paired latent representations. 
This behavior is consistent with our theoretical analysis that flow matching maintains cluster separability under finite-step evolution, resulting in stronger robustness under severe incompleteness.

\begin{table}[t]
\centering
\caption{\textbf{Ablation study of the proposed loss components on the HandWritten dataset under missing rate $\tau=0.3$.}
$\mathcal{L}_{\text{flow-pred}}$ and $\mathcal{L}_{\text{flow-comp}}$ denote the flow-based shortcut completion losses, 
$\mathcal{L}_{\text{cluster}}$ represents the cluster-level alignment loss, 
and $\mathcal{L}_{\text{entropy}}$ refers to the entropy-based alignment loss.
\cmark\ and \xmark\ indicate whether a component is enabled or removed. All variants retain the reconstruction loss.}
\label{tab:handwrite_exp}

\begin{adjustbox}{scale=0.85}
\begin{tabular*}{1.15\textwidth}{@{\extracolsep{\fill}} p{0.34\textwidth} cccc ccc}
\toprule
\multirow{2}{*}{\textbf{Variants}} 
& \multicolumn{4}{c}{\textbf{Loss Components}} 
& \multicolumn{3}{c}{\textbf{Metrics (\%)}} \\
\cmidrule(lr){2-5} \cmidrule(lr){6-8}
& $\mathcal{L}_{\text{flow-pred}}$ 
& $\mathcal{L}_{\text{flow-comp}}$ 
& $\mathcal{L}_{\text{cluster}}$ 
& $\mathcal{L}_{\text{entropy}}$ 
& \textbf{ACC} & \textbf{NMI} & \textbf{ARI} \\
\midrule
(w/o) $\mathcal{L}_{\text{flow-pred}}$ & \xmark & \cmark & \cmark & \cmark & 73.95 & 70.92 & 61.42 \\
(w/o) $\mathcal{L}_{\text{flow-comp}}$ & \cmark & \xmark & \cmark & \cmark & 76.15 & 71.86 & 63.38 \\
(w/o) $\mathcal{L}_{\text{flow-pred}}$ \& $\mathcal{L}_{\text{flow-comp}}$ & \xmark & \xmark & \cmark & \cmark & 77.35 & 71.89 & 64.16 \\
(w/o) $\mathcal{L}_{\text{cluster}}$ & \cmark & \cmark & \xmark & \cmark & 27.75 & 37.13 & 20.05 \\
(w/o) $\mathcal{L}_{\text{entropy}}$ & \cmark & \cmark & \cmark & \xmark & 77.20 & 71.79 & 66.69 \\
(w/o) $\mathcal{L}_{\text{cluster}}$ \& $\mathcal{L}_{\text{entropy}}$ & \cmark & \cmark & \xmark & \xmark & 28.70 & 17.67 & 7.75 \\
\midrule
\textbf{Full model} & \cmark & \cmark & \cmark & \cmark & \textbf{86.50} & \textbf{82.09} & \textbf{77.04} \\
\bottomrule
\end{tabular*}
\end{adjustbox}
\end{table}

\subsection{Ablation Study}
\label{sec:ablation}

We conduct ablation experiments on HandWritten with a missing rate $\tau=0.3$ to analyze the contribution of each component in the proposed flow-matching formulation, as summarized in Table~\ref{tab:handwrite_exp}.

\subsubsection{Effect of $\mathcal{L}_{\text{cluster}}$.}
Removing the cluster-level contrastive loss causes the most severe performance drop (ACC decreases to 27.75\%). 
This loss explicitly aligns cross-view cluster assignments and establishes view-consistent clustering structure in the latent space. 
Without this constraint, representations from different views may produce inconsistent cluster predictions, violating the cluster separability assumption required by the flow-matching formulation. 
Consequently, reliable cross-view completion cannot be achieved.

\mypara{Effect of $\mathcal{L}_{\text{entropy}}$.}
The entropy-based alignment complements the cluster-level contrastive objective by further stabilizing cross-view probability predictions.
While $\mathcal{L}_{\text{cluster}}$ enforces assignment consistency at the sample level, $\mathcal{L}_{\text{entropy}}$ regularizes the probability distributions across views. 
Removing this term leads to performance degradation, indicating that probability-level alignment is also important for maintaining stable cluster assignments and cross-view completion.

\mypara{Effect of $\mathcal{L}_{\text{flow-pred}}$.}
The flow prediction loss supervises the learned vector field to follow the straight-path transport direction between paired latent representations. 
Removing this loss weakens the ability of the model to learn a consistent transport direction, resulting in degraded completion quality. 
This confirms that accurate velocity-field estimation is necessary for modeling cross-view transport.

\mypara{Effect of $\mathcal{L}_{\text{flow-comp}}$.}
The completion consistency loss enforces bidirectional completion consistency between paired latent representations. 
Without this constraint, the recovered features may deviate from the true latent representations during finite-step integration, leading to degraded performance. 
This loss therefore improves the stability of cross-view completion.
When both flow losses are removed, the model reduces to an alignment-only variant and achieves moderate performance. 
Introducing the flow-matching formulation substantially improves all metrics, demonstrating that straight-path transport effectively exploits the clustering structure established by the alignment modules.

\begin{figure}[t]
    \centering
    \begin{subfigure}[b]{0.25\textwidth}
        \centering
        \includegraphics[width=\textwidth]{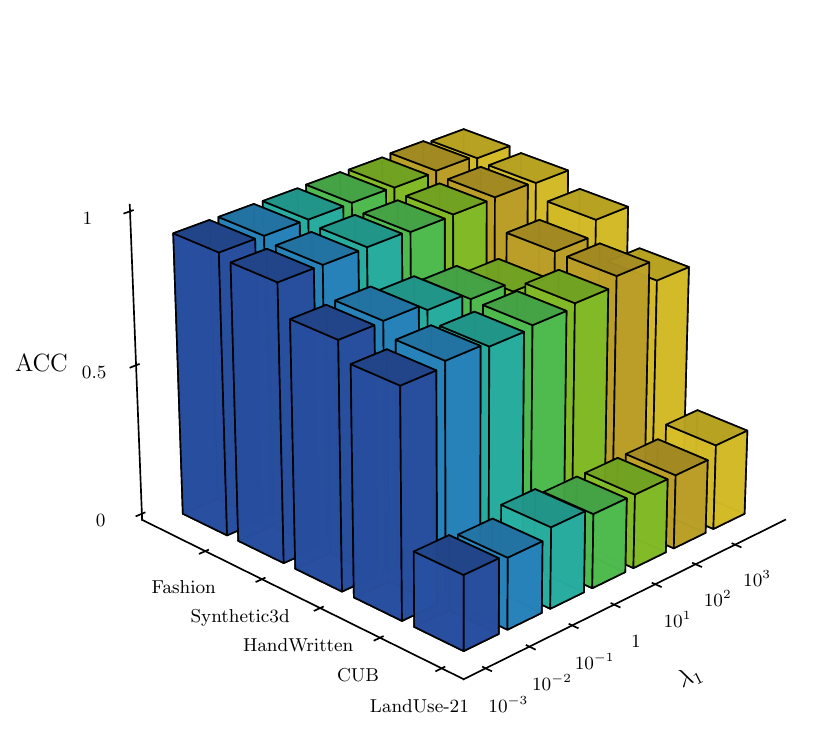}
        \caption{ACC vs. $\lambda_1$}
        \label{fig:lambda1}
    \end{subfigure}%
    \hfill
    \begin{subfigure}[b]{0.25\textwidth}
        \centering
        \includegraphics[width=\textwidth]{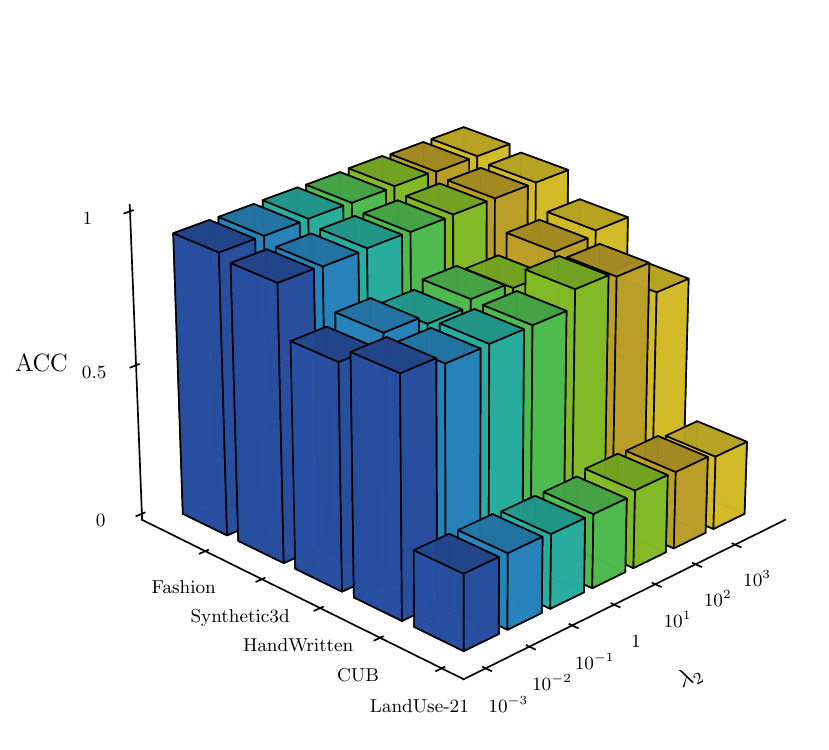}
        \caption{ACC vs. $\lambda_2$}
        \label{fig:nmi_lambda2}
    \end{subfigure}%
    \hfill
    \begin{subfigure}[b]{0.25\textwidth}
        \centering
        \includegraphics[width=\textwidth]{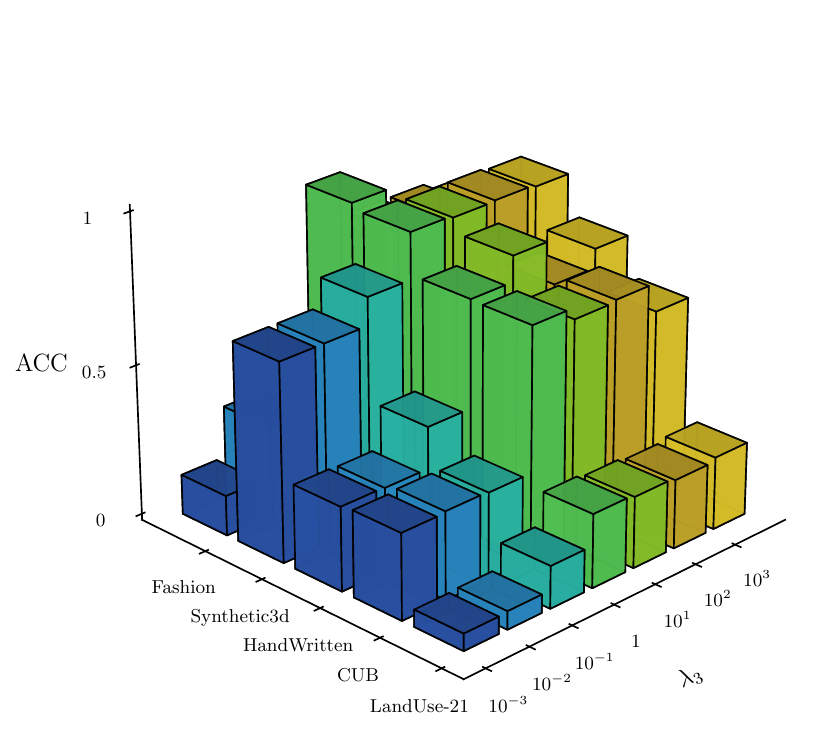}
        \caption{ACC vs. $\lambda_3$}
        \label{fig:lambda3}
    \end{subfigure}%
    \hfill
    \begin{subfigure}[b]{0.25\textwidth}
        \centering
        \includegraphics[width=\textwidth]{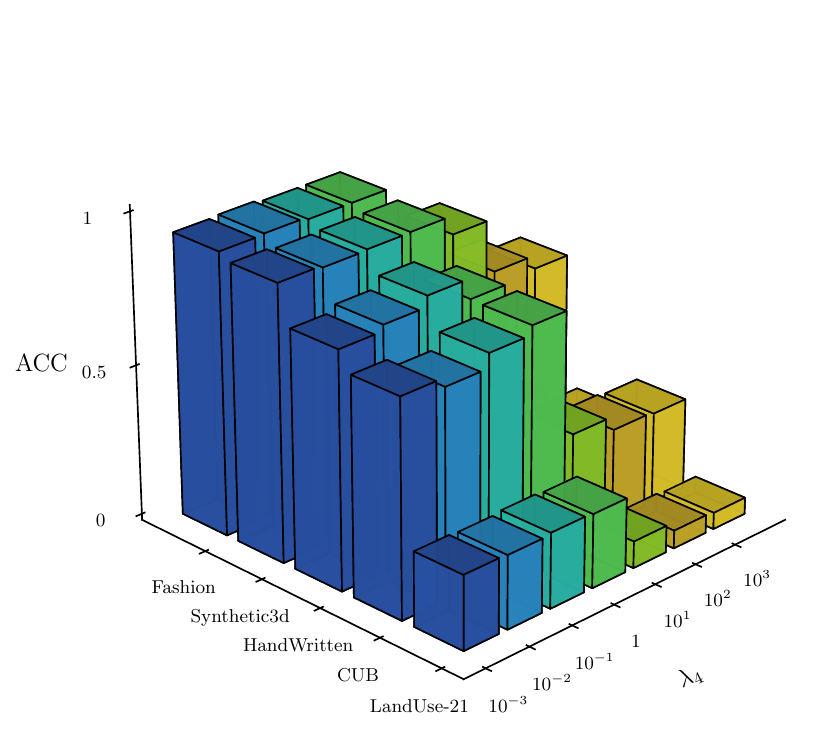}
        \caption{ACC vs. $\lambda_4$}
        \label{fig:lambda4}
    \end{subfigure}
    
    \caption{\textbf{Hyperparameter analysis of the loss coefficients on five datasets with missing rate $\tau=0.3$.}
(a) $\lambda_1$ for the flow prediction loss $\mathcal{L}_{\text{flow-pred}}$;
(b) $\lambda_2$ for the completion consistency loss $\mathcal{L}_{\text{flow-comp}}$;
(c) $\lambda_3$ for the cluster-level contrastive loss $\mathcal{L}_{\text{cluster}}$;
(d) $\lambda_4$ for the entropy-based alignment loss $\mathcal{L}_{\text{entropy}}$.
}
    \label{fig:parameter_analysis}
\end{figure}

\subsection{Analysis}
\label{sec:analysis}

\mypara{Hyperparameters Analysis.}
We analyze the sensitivity of the loss coefficients $\lambda_1$, $\lambda_2$, $\lambda_3$, and $\lambda_4$ under the missing rate $\tau=0.3$, as shown in Fig.~\ref{fig:parameter_analysis}. 
The performance remains stable across a wide range of $\lambda_1$ and $\lambda_2$, corresponding to the flow prediction loss $\mathcal{L}_{\text{flow-pred}}$ and the completion consistency loss $\mathcal{L}_{\text{flow-comp}}$. 
In contrast, the method is more sensitive to $\lambda_3$ and $\lambda_4$, which control the cluster-level contrastive loss $\mathcal{L}_{\text{cluster}}$ and the entropy-based alignment loss $\mathcal{L}_{\text{entropy}}$. 
This observation is consistent with our formulation: alignment losses enforce cross-view assignment consistency and stabilize probability predictions, which are required by the assumptions underlying the flow-matching formulation.

\mypara{Visualization Analysis.}
We visualize the learned representations on the HandWritten dataset using t-SNE under the missing rate $\tau=0.3$, as shown in Fig.~\ref{fig:visual}. 
Compared with the scattered and mixed raw features, the representations learned by our framework form clearly separated clusters. 
This improvement arises because the cluster-level contrastive loss and entropy-based alignment enforce cross-view assignment consistency and stabilize probability predictions, while the flow-matching formulation performs cross-view completion by following the straight-path transport between paired latent representations.

\mypara{Flow-Step Analysis.}
We analyze the influence of the number of flow steps used for cross-view completion. 
As shown in Fig.~\ref{fig:visual}, the clustering performance remains stable across different step numbers. 
This observation is consistent with our theoretical Corollary~\ref{corollary_3} that flow matching achieves cluster separability within any finite number of steps. 
Once the vector field is learned, the straight-path transport learned by the vector field produces stable completion under different finite-step integration settings.

\begin{figure}[t]
    \centering  
    
    \begin{subfigure}[b]{0.64\textwidth}
        \centering
        \includegraphics[height=5cm, width=\linewidth, keepaspectratio]{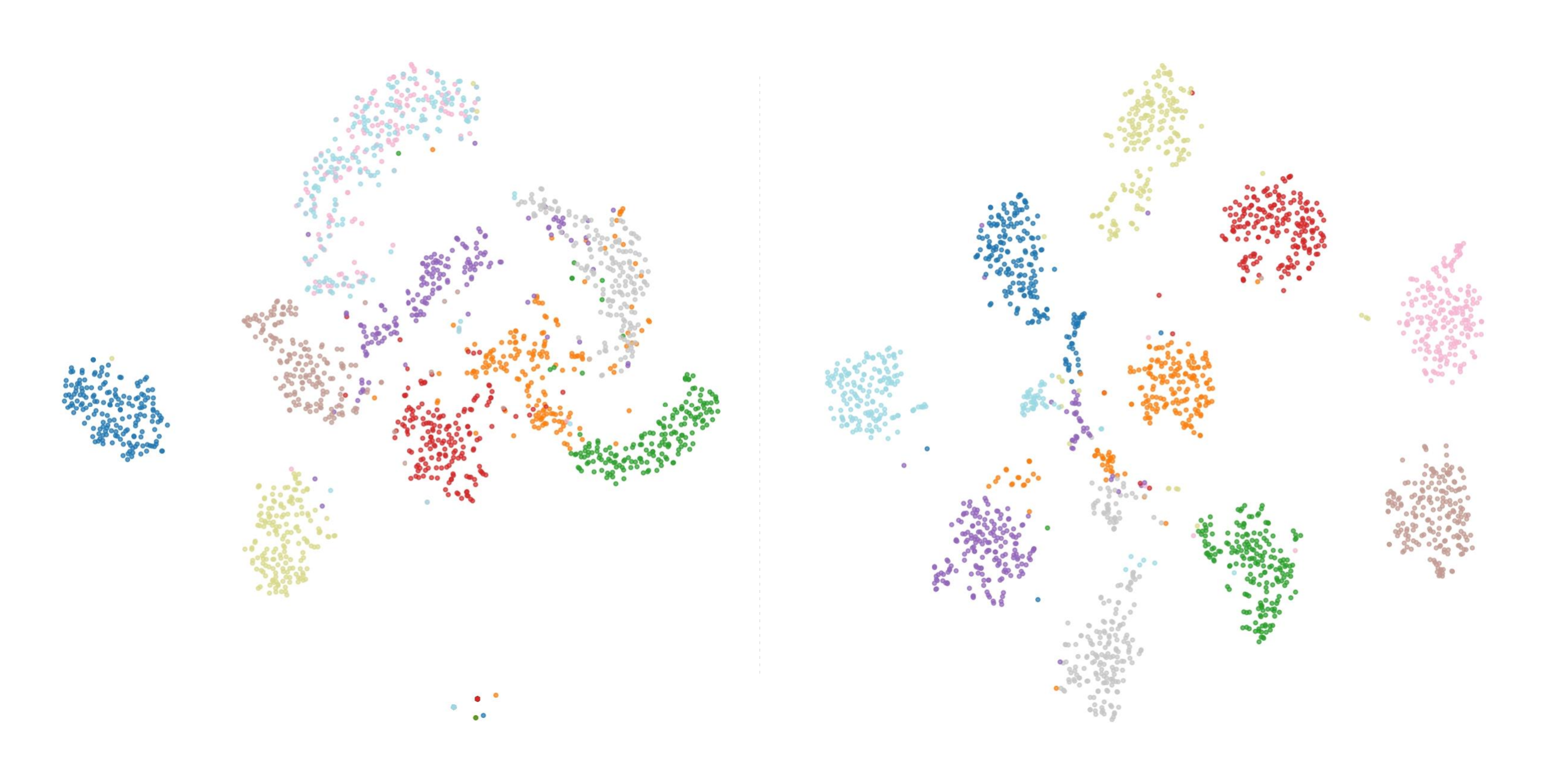}
        \caption{t-SNE Visualization}
        \label{fig:tsne}
    \end{subfigure}
    \begin{subfigure}[b]{0.34\textwidth}
        \centering
        \includegraphics[height=5cm, width=\linewidth, keepaspectratio]{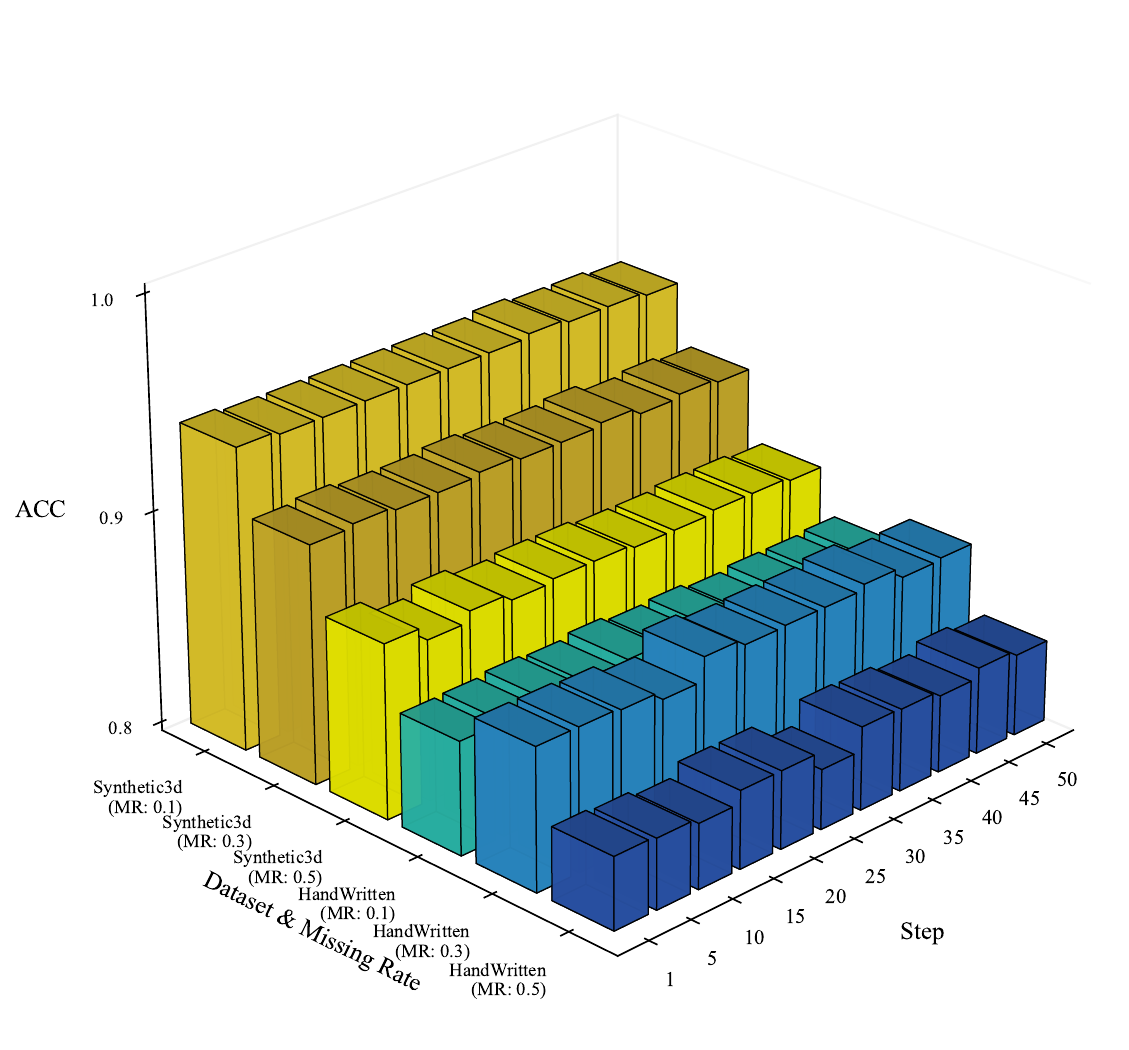}
        \caption{Flow-Step Analysis}
        \label{fig:flow-steps}
    \end{subfigure}
    
    \caption{\textbf{Representation visualization and flow-step analysis.}
(a) t-SNE visualization on the HandWritten dataset with missing rate $\tau=0.3$, showing the raw features (left) and the representations learned by our method (right).
(b) Clustering accuracy under different flow step numbers on the HandWritten and Synthetic3D datasets.}
    \label{fig:visual}
\end{figure}

\section{Conclusion}

We presented a flow-matching framework for incomplete multi-view clustering that formulates missing-view recovery as a straight-path transport problem in the latent representation space. Through theoretical analysis, we showed that flow matching provides two desirable properties for IMVC: shortcut-path transport and finite-step cluster separability. Guided by this analysis, we developed an end-to-end model that integrates within-view reconstruction, cross-view alignment, and flow-based completion. Extensive experiments on multiple benchmarks demonstrate that the proposed approach consistently achieves strong clustering performance and remains robust under high missing rates.

\bibliographystyle{splncs04}
\bibliography{main}

\begin{thebibliography}{10}
\providecommand{\url}[1]{\texttt{#1}}
\providecommand{\urlprefix}{URL }
\providecommand{\doi}[1]{https://doi.org/#1}

\bibitem{asuncion2007uci}
Asuncion, A., Newman, D., et~al.: Uci machine learning repository (2007)

\bibitem{bickel2004multi}
Bickel, S., Scheffer, T.: Multi-view clustering. In: Icdm. vol.~4, pp. 19--26 (2004)

\bibitem{cai2023realize}
Cai, H., Huang, W., Yang, S., Ding, S., Zhang, Y., Hu, B., Zhang, F., Cheung, Y.M.: Realize generative yet complete latent representation for incomplete multi-view learning. IEEE Trans. Pattern Anal. Mach. Intell.  \textbf{46}(5),  3637--3652 (2023)

\bibitem{chao2024incomplete}
Chao, G., Jiang, Y., Chu, D.: Incomplete contrastive multi-view clustering with high-confidence guiding. In: AAAI. vol.~38, pp. 11221--11229 (2024)

\bibitem{dao2023flow}
Dao, Q., Phung, H., Nguyen, B., Tran, A.: Flow matching in latent space. arXiv preprint arXiv:2307.08698  (2023)

\bibitem{dhariwal2021diffusion}
Dhariwal, P., Nichol, A.: Diffusion models beat gans on image synthesis. Adv. Neural Inform. Process. Syst.  \textbf{34},  8780--8794 (2021)

\bibitem{feng2024partial}
Feng, W., Sheng, G., Wang, Q., Gao, Q., Tao, Z., Dong, B.: Partial multi-view clustering via self-supervised network. In: AAAI. vol.~38, pp. 11988--11995 (2024)

\bibitem{gao2016incomplete}
Gao, H., Peng, Y., Jian, S.: Incomplete multi-view clustering. In: ACM Int. Conf. Multimedia. pp. 245--255. Springer (2016)

\bibitem{gui2021review}
Gui, J., Sun, Z., Wen, Y., Tao, D., Ye, J.: A review on generative adversarial networks: Algorithms, theory, and applications. IEEE transactions on knowledge and data engineering  \textbf{35}(4),  3313--3332 (2021)

\bibitem{ho2020denoising}
Ho, J., Jain, A., Abbeel, P.: Denoising diffusion probabilistic models. Adv. Neural Inform. Process. Syst.  \textbf{33},  6840--6851 (2020)

\bibitem{huang2024flow}
Huang, Z., Geng, Z., Luo, W., Qi, G.j.: Flow generator matching. arXiv preprint arXiv:2410.19310  (2024)

\bibitem{jin2023deep}
Jin, J., Wang, S., Dong, Z., Liu, X., Zhu, E.: Deep incomplete multi-view clustering with cross-view partial sample and prototype alignment. In: IEEE Conf. Comput. Vis. Pattern Recog. pp. 11600--11609 (2023)

\bibitem{kingma2013auto}
Kingma, D.P., Welling, M.: Auto-encoding variational bayes. arXiv preprint arXiv:1312.6114  (2013)

\bibitem{klein2023equivariant}
Klein, L., Kr{\"a}mer, A., No{\'e}, F.: Equivariant flow matching. Adv. Neural Inform. Process. Syst.  \textbf{36},  59886--59910 (2023)

\bibitem{kumar2011co}
Kumar, A., Rai, P., Daume, H.: Co-regularized multi-view spectral clustering. Adv. Neural Inform. Process. Syst.  \textbf{24} (2011)

\bibitem{labs2025flux}
Labs, B.F., Batifol, S., Blattmann, A., Boesel, F., Consul, S., Diagne, C., Dockhorn, T., English, J., English, Z., Esser, P., et~al.: Flux. 1 kontext: Flow matching for in-context image generation and editing in latent space. arXiv preprint arXiv:2506.15742  (2025)

\bibitem{li2023incomplete}
Li, H., Li, Y., Yang, M., Hu, P., Peng, D., Peng, X.: Incomplete multi-view clustering via prototype-based imputation. arXiv preprint arXiv:2301.11045  (2023)

\bibitem{li2025community}
Li, H., Lin, Y., Hu, P., Yang, M., Peng, X.: Community-aware multi-view representation learning with incomplete information. IEEE Trans. Pattern Anal. Mach. Intell.  (2025)

\bibitem{lin2023consistent}
Lin, R., Du, S., Wang, S., Guo, W.: Consistent graph embedding network with optimal transport for incomplete multi-view clustering. Information Sciences  \textbf{647},  119418 (2023)

\bibitem{lin2022dual}
Lin, Y., Gou, Y., Liu, X., Bai, J., Lv, J., Peng, X.: Dual contrastive prediction for incomplete multi-view representation learning. IEEE Trans. Pattern Anal. Mach. Intell.  \textbf{45}(4),  4447--4461 (2022)

\bibitem{lin2021completer}
Lin, Y., Gou, Y., Liu, Z., Li, B., Lv, J., Peng, X.: Completer: Incomplete multi-view clustering via contrastive prediction. In: IEEE Conf. Comput. Vis. Pattern Recog. pp. 11174--11183 (2021)

\bibitem{lipman2022flow}
Lipman, Y., Chen, R.T., Ben-Hamu, H., Nickel, M., Le, M.: Flow matching for generative modeling. arXiv preprint arXiv:2210.02747  (2022)

\bibitem{liu2022flow}
Liu, X., Gong, C., Liu, Q.: Flow straight and fast: Learning to generate and transfer data with rectified flow. arXiv preprint arXiv:2209.03003  (2022)

\bibitem{liu2020efficient}
Liu, X., Li, M., Tang, C., Xia, J., Xiong, J., Liu, L., Kloft, M., Zhu, E.: Efficient and effective regularized incomplete multi-view clustering. IEEE Trans. Pattern Anal. Mach. Intell.  \textbf{43}(8),  2634--2646 (2020)

\bibitem{liu2019efficient}
Liu, X., Zhu, X., Li, M., Tang, C., Zhu, E., Yin, J., Gao, W.: Efficient and effective incomplete multi-view clustering. In: AAAI. vol.~33, pp. 4392--4399 (2019)

\bibitem{oord2018representation}
Oord, A.v.d., Li, Y., Vinyals, O.: Representation learning with contrastive predictive coding. arXiv preprint arXiv:1807.03748  (2018)

\bibitem{podell2023sdxl}
Podell, D., English, Z., Lacey, K., Blattmann, A., Dockhorn, T., M{\"u}ller, J., Penna, J., Rombach, R.: Sdxl: Improving latent diffusion models for high-resolution image synthesis. arXiv preprint arXiv:2307.01952  (2023)

\bibitem{schusterbauer2025diff2flow}
Schusterbauer, J., Gui, M., Fundel, F., Ommer, B.: Diff2flow: Training flow matching models via diffusion model alignment. In: IEEE Conf. Comput. Vis. Pattern Recog. pp. 28347--28357 (2025)

\bibitem{songdenoising}
Song, J., Meng, C., Ermon, S.: Denoising diffusion implicit models. In: Int. Conf. Learn. Represent. (2021)

\bibitem{song2020score}
Song, Y., Sohl-Dickstein, J., Kingma, D.P., Kumar, A., Ermon, S., Poole, B.: Score-based generative modeling through stochastic differential equations. arXiv preprint arXiv:2011.13456  (2020)

\bibitem{tang2022deep}
Tang, H., Liu, Y.: Deep safe incomplete multi-view clustering: Theorem and algorithm. In: Int. Conf. Mach. Learn. pp. 21090--21110. PMLR (2022)

\bibitem{wah2011caltech}
Wah, C., Branson, S., Welinder, P., Perona, P., Belongie, S., et~al.: The caltech-ucsd birds-200-2011 dataset. Tech. rep., Technical Report CNS-TR-2011-001, California Institute of Technology (2011)

\bibitem{wang2023self}
Wang, J., Xu, Z., Yang, X., Guo, D., Liu, L.: Self-supervised image clustering from multiple incomplete views via constrastive complementary generation. IET Computer Vision  \textbf{17}(2),  189--202 (2023)

\bibitem{wang2020generative}
Wang, Q., Ding, Z., Tao, Z., Gao, Q., Fu, Y.: Generative partial multi-view clustering. arXiv preprint arXiv:2003.13088  (2020)

\bibitem{wang2022incomplete}
Wang, Y., Chang, D., Fu, Z., Wen, J., Zhao, Y.: Incomplete multiview clustering via cross-view relation transfer. IEEE Trans. Circuit Syst. Video Technol.  \textbf{33}(1),  367--378 (2022)

\bibitem{wei2020deep}
Wei, S., Wang, J., Yu, G., Domeniconi, C., Zhang, X.: Deep incomplete multi-view multiple clusterings. In: 2020 IEEE International Conference on Data Mining (ICDM). pp. 651--660. IEEE (2020)

\bibitem{wen2024diffusion}
Wen, J., Deng, S., Wong, W., Chao, G., Huang, C., Fei, L., Xu, Y.: Diffusion-based missing-view generation with the application on incomplete multi-view clustering. In: Int. Conf. Mach. Learn. (2024)

\bibitem{wen2021structural}
Wen, J., Wu, Z., Zhang, Z., Fei, L., Zhang, B., Xu, Y.: Structural deep incomplete multi-view clustering network. In: Proceedings of the 30th ACM international conference on information \& knowledge management. pp. 3538--3542 (2021)

\bibitem{wen2023graph}
Wen, J., Xu, G., Tang, Z., Wang, W., Fei, L., Xu, Y.: Graph regularized and feature aware matrix factorization for robust incomplete multi-view clustering. IEEE Trans. Circuit Syst. Video Technol.  \textbf{34}(5),  3728--3741 (2023)

\bibitem{wen2018incomplete}
Wen, J., Xu, Y., Liu, H.: Incomplete multiview spectral clustering with adaptive graph learning. IEEE transactions on cybernetics  \textbf{50}(4),  1418--1429 (2018)

\bibitem{xiao2017fashion}
Xiao, H., Rasul, K., Vollgraf, R.: Fashion-mnist: a novel image dataset for benchmarking machine learning algorithms. arXiv preprint arXiv:1708.07747  (2017)

\bibitem{xu2024deep}
Xu, G., Wen, J., Liu, C., Hu, B., Liu, Y., Fei, L., Wang, W.: Deep variational incomplete multi-view clustering: Exploring shared clustering structures. In: AAAI. vol.~38, pp. 16147--16155 (2024)

\bibitem{xu2023adaptive}
Xu, J., Li, C., Peng, L., Ren, Y., Shi, X., Shen, H.T., Zhu, X.: Adaptive feature projection with distribution alignment for deep incomplete multi-view clustering. IEEE Trans. Image Process.  \textbf{32},  1354--1366 (2023)

\bibitem{xu2022deep}
Xu, J., Li, C., Ren, Y., Peng, L., Mo, Y., Shi, X., Zhu, X.: Deep incomplete multi-view clustering via mining cluster complementarity. In: AAAI. vol.~36, pp. 8761--8769 (2022)

\bibitem{yan2023gcfagg}
Yan, W., Zhang, Y., Lv, C., Tang, C., Yue, G., Liao, L., Lin, W.: Gcfagg: Global and cross-view feature aggregation for multi-view clustering. In: IEEE Conf. Comput. Vis. Pattern Recog. pp. 19863--19872 (2023)

\bibitem{yang2010bag}
Yang, Y., Newsam, S.: Bag-of-visual-words and spatial extensions for land-use classification. In: Proceedings of the 18th SIGSPATIAL international conference on advances in geographic information systems. pp. 270--279 (2010)

\bibitem{yin2021incomplete}
Yin, J., Sun, S.: Incomplete multi-view clustering with reconstructed views. IEEE Transactions on Knowledge and Data Engineering  \textbf{35}(3),  2671--2682 (2021)

\bibitem{zhang2019cpm}
Zhang, C., Han, Z., Fu, H., Zhou, J.T., Hu, Q., et~al.: Cpm-nets: Cross partial multi-view networks. Adv. Neural Inform. Process. Syst.  \textbf{32} (2019)

\bibitem{zhang2023robust}
Zhang, C., Wei, J., Wang, B., Li, Z., Chen, C., Li, H.: Robust spectral embedding completion based incomplete multi-view clustering. In: ACM Int. Conf. Multimedia. pp. 300--308 (2023)

\bibitem{zhang2025incomplete}
Zhang, Y., Lin, Y., Yan, W., Yao, L., Wan, X., Li, G., Zhang, C., Ke, G., Xu, J.: Incomplete multi-view clustering via diffusion contrastive generation. In: AAAI. vol.~39, pp. 22650--22658 (2025)

\bibitem{zhao2017multi}
Zhao, H., Ding, Z., Fu, Y.: Multi-view clustering via deep matrix factorization. In: AAAI. vol.~31 (2017)

\end{thebibliography}

\clearpage
\renewcommand{\thesection}{\Alph{section}}
\renewcommand{\thesubsection}{\Alph{section}.\arabic{subsection}}
\renewcommand{\thesubsubsection}{\Alph{section}.\arabic{subsection}.\arabic{subsubsection}}

\edef\savedequation{\the\value{equation}}  
\edef\savedtable{\the\value{table}}        
\edef\savedsection{\the\value{section}}    
\edef\savedfigure{\the\value{figure}}

\appendix
\renewcommand{\theHsection}{appendix.\Alph{section}}
\renewcommand{\theHsubsection}{appendix.\Alph{section}.\arabic{subsection}}
\renewcommand{\theHsubsubsection}{appendix.\Alph{section}.\arabic{subsection}.\arabic{subsubsection}}
\section*{Appendix}

\setcounter{section}{0}

\section{Limitations of Conditional Diffusion Models on IMVC}\label{sec:mathematical-diffusion}

Here, we establish a formal foundation for the comparative theoretical analysis between the diffusion model and flow matching.
Specifically, we first augment the problem setup in \secref{sec:theoretical-analysis} in the context of conditional diffusion models (\secref{sec:problem-setup}).
Then, based on the assumptions we have enumerated in \secref{sec:theoretical-analysis}, we complete the full mathematical formulation of the continuous-time conditional diffusion model, including both the forward process and the reverse denoising process on the task of IMVC (\secref{sec:conditional-diffusion-process}).
In~\secref{sec:conditional-diffusion-process},  we summarize three properties of conditional diffusion models, which lead to two limitations of such models under the IMVC task setup (\secref{sec:diffusion-limitations}).

This section complements \secref{sec:theoretical-analysis}.

\subsection{Additional Problem Setup}\label{sec:problem-setup}

This setup augments the problem setup we introduced in \secref{sec:problem-setup-short} in the context of the conditional diffusion model.

The diffusion process considered in subsequent analysis is defined over continuous time $t \in [0,T]$, where $t=0$ corresponds to the clean representation space and $t=T$ corresponds to pure Gaussian noise. Let $g(t) \in \mathbb{R}_+$ denote the diffusion coefficient, and define the accumulated noise variance as
$\sigma_t^2 = \int_0^t g^2(s)\,ds$,
with $g(0)=0$, $g(T)>0$.

\subsection{Mathematical Formulation of Conditional Diffusion 
Models in IMVC}\label{sec:conditional-diffusion-process}

Previous work~\cite{zhang2025incomplete} has established that the conditional diffusion model can be adopted for the task of IMVC~\footnote{We refer readers to~\secref{sec:proof-of-diffusion}, Thm.~\ref{thm:diffusion_clustering} for the proof of diffusion’s theoretical applicability to the task, complementing our discussion in this section, as well as the work of Zhang et al.~\cite{zhang2025incomplete}}.
However, we argue that its application to the task is suboptimal due to the intrinsic theoretical limitations.

Below, we present a complete mathematical formulation of the conditional diffusion models, where we establish three key properties, by analyzing the forward and reverse processes.
The properties serve as the theoretical foundation for the analysis of its limitations in IMVC in~\secref{sec:diffusion-limitations}.

\paragraph{Forward Noising Process}
The forward noising process of the diffusion model is the Ornstein--Uhlenbeck (OU) SDE:
\begin{equation}\label{eqn:property1_eq1}
d \boldsymbol{x}_t = -\frac{1}{2} g^2(t) \boldsymbol{x}_t \, dt + g(t)\, d \boldsymbol{w}_t, \quad \boldsymbol{x}_0 = \boldsymbol{z}_2;
\end{equation}
where $\boldsymbol{x}_t \in Z$ denotes the noised representation at time $t$, 
$\boldsymbol{w}_t$ is a $d$-dimensional standard Wiener process (\ie the stochastic noise source), 
and $\boldsymbol{z}_2$ is the clean representation of view 2, as the missing view that shall be recovered.

By linearity of Gaussian processes, \eqnref{eqn:property1_eq1} admits an explicit solution:
\begin{equation}\label{eqn:property1_eq2}
\boldsymbol{x}_t = \boldsymbol{z}_2 \cdot e^{-\frac{1}{2}\sigma_t^2}
+ \int_0^t g(s)\, e^{-\frac{1}{2}\int_s^t g^2(r)\,dr}\, d \boldsymbol{w}_s.
\end{equation}
Its marginal distribution is Gaussian:
\begin{equation}\label{eqn:property1_pre1}
p_t(\boldsymbol{x}_t\mid \boldsymbol{z}_2)=\mathcal{N}\!\left(\boldsymbol{x}_t;\alpha_t z_2,\sigma_t^2 I_d\right),\quad 
\alpha_t=e^{-\frac{1}{2}\sigma_t^2};
\end{equation}
where $\alpha_t \in (0,1]$ is the signal attenuation coefficient, satisfying 
$\alpha_0 = 1$ and 
$\alpha_T \to 0$; 
$I_d$ denotes the $d$-dimensional identity matrix.

Hence, in the high-noise limit $t\to T$ (\ie $\alpha_T\to 0$), the terminal distribution becomes a global Gaussian independent of the underlying cluster set:
\begin{equation}\label{eqn:property1}
p_T(\boldsymbol{x}_T\mid \boldsymbol{z}_2)\to \mathcal{N}\!\left(\boldsymbol{x}_T;0,\sigma_T^2 I_d\right).
\end{equation}
As such, we arrive at
\begin{myproperty}\label{prop:1}
Regardless of which cluster set $\bmz_2$ belongs to, the diffusion endpoint at $t\to T$ (\ie $\alpha_T\to 0$) is cluster-agnostic noise and cluster identity is fully obscured.
\end{myproperty}

\paragraph{Reverse Denoising Process}
In IMVC, diffusion model recovers the clean representation of $\boldsymbol{z}_2$ at $t=0$ conditioned on the observed-view representation $\boldsymbol{z}_1$ by integrating a reverse-time conditional SDE:
\begin{equation}\label{eqn:property2_pre1}
    d\boldsymbol{x}_t=\Big[g^2(t)s^*(\boldsymbol{x}_t,t,\boldsymbol{z}_1)-\frac{1}{2}g^2(t)\boldsymbol{x}_t\Big]dt+g(t)d\boldsymbol{\bar w}_t,\quad
\boldsymbol{x}_T\sim \mathcal{N}(0,\sigma_T^2 I_d).
\end{equation}
Here $\boldsymbol{\bar w}_t$ is a reverse-time Wiener process (independent of the forward one) and constitutes a non-removable source of randomness; $s^*(\boldsymbol{x}_t,t,\boldsymbol{z}_1)=-\nabla_{\boldsymbol{x}_t}\log p_t(\boldsymbol{x}_t\mid \boldsymbol{z}_1)$ is the true conditional score.
From~\eqnref{eqn:property1} and~\ref{eqn:property2_pre1}, we derive
\begin{myproperty}\label{prop:2}
The reverse process starts from a global Gaussian that is geometrically unrelated to the cluster set of $\boldsymbol{z}_1$:
\begin{equation}\label{eqn:property2}
    p_T(\boldsymbol{x}_T\mid \boldsymbol{z}_1)=\mathcal{N}(0,\sigma_T^2 I_d).
\end{equation}
This stochasticity and cluster-agnostic start are intrinsic to the diffusion mechanism and cannot be eliminated by training or tuning.  
\end{myproperty}

\paragraph{Cluster Separability Condition of the Reverse Process}
A necessary condition for diffusion-based completion to preserve clustering is that the supports of noised distributions from different cluster sets do not overlap.
For any $\boldsymbol{z}_2^i\in M_i$ and $\boldsymbol{z}_2^j\in M_j$ with $i\neq j$, the separability requirement is
\begin{equation}\label{eqn:property3_pre}
    \mathrm{supp}\!\left(p_t(\boldsymbol{x}_t\mid \boldsymbol{z}_2^i)\right)\cap \mathrm{supp}\!\left(p_t(\boldsymbol{x}_t\mid \boldsymbol{z}_2^j)\right)=\emptyset.
\end{equation}
Using the Gaussian form in~\eqnref{eqn:property1_pre1}, this condition holds iff
\begin{equation}\label{eqn:property3_pre1}
\sigma_t<\frac{\delta}{2},
\end{equation}
where $\delta$ is the minimum inter-cluster distance from Assumption~\ref{assump:separable_representation}. Therefore, we naturally derive
\begin{myproperty}\label{prop:3}
Diffusion-induced cluster separability is guaranteed only when the accumulated noise variance is sufficiently small --- \ie essentially in the asymptotic regime $t\to 0$. 
\end{myproperty}

\subsection{Limitations of Diffusion Models in IMVC}
\label{sec:diffusion-limitations}

The mathematical formulation and three properties of conditional diffusion models lead to two limitations for these models when applied to the IMVC task.
We note that these limitations cannot be resolved through engineering efforts, such as hyperparameter tuning or architectural designs.
Below, we introduce these limitations while connecting them to the properties introduced in~\secref{sec:conditional-diffusion-process}.

\begin{limitation}[Cluster-Agnostic Gaussian Initialization]
From~\eqnref{eqn:property1} and~\ref{eqn:property2} in Property~\ref{prop:1} and~\ref{prop:2}, the reverse process of the diffusion model starts from a fixed initialization $\boldsymbol{x}_T \sim \mathcal{N}(0, \sigma_T^2 I_d)$.
This starting point is a global Gaussian noise distribution that has no association with the observed view $\boldsymbol{z}_1$ or the cluster set to which it belongs.

As such, we point out two direct consequences. 
The first is path redundancy, \ie the reverse process must complete the full evolution from the global noise space to the representation space and finally to the cluster set.
Even if the observed view and the completion target are highly aligned (\ie $\|\boldsymbol{z}_1 - \boldsymbol{z}_2\| \to 0$), the process cannot exploit the structural information of view consistency to form a shortcut path.
A substantial portion of computation is therefore consumed in noise attenuation rather than view completion and clustering refinement.

The second consequence is geometric irrelevance.
Since the noise initialization contains no cluster geometric information, the early stage of the reverse process must extract cluster structure from noise without any geometric prior, resulting in low efficiency for IMVC.

\end{limitation}

\begin{limitation}[Asymptotic Cluster Separability]
From Property~\ref{prop:3}, \eqnref{eqn:property3_pre1}, the cluster separability of the diffusion model holds only when $\sigma_t < \delta/2$ (\ie $t \to 0$).
This strong asymptotic condition is difficult to satisfy under finite-step inference in IMVC, where we note two implications.

The first being the significant degradattion of separability under finite steps, \ie when the minimum inter-cluster distance $\delta$ is small (which is common in real-world data), it is possible that $\sigma_t \not< \delta/2$ within finite steps, resulting in overlap between the supports of noised distributions from different clusters, and samples are prone to cross-cluster drift.
This will reduce the clustering accuracy.

And second, clustering becomes a byproduct of noise attenuation — the cluster structure in diffusion models emerges only when the noise level becomes sufficiently small, and most reverse-time steps are devoted to noise reduction rather than the core objective of view completion and fine-grained clustering adjustment, leading to low efficiency.
\end{limitation}

The two limitations discussed above collectively demonstrate that diffusion model is structurally misaligned with the task requirements of IMVC.

\section{Proof of Clustering Properties for Diffusion and Flow Matching}\label{sec:proof-of-clustering-properties}

To fill in the gap of existing work~\cite{zhang2025incomplete}, which has only empirically adopted diffusion models for end-to-end IMVC without theoretical analysis, we provide proof that the reverse denoising process of conditional diffusion models is intrinsically a clustering process (\secref{sec:proof-of-diffusion}).

\subsection{Clustering Property of the Reverse Denoising Process in Conditional Diffusion Models}\label{sec:proof-of-diffusion}

Based on the complete mathematical formulation and the properties of the conditional diffusion model established above (\secref{sec:conditional-diffusion-process}), we prove that the reverse denoising process of the conditional diffusion model is intrinsically a clustering process.
In essence, starting from pure Gaussian noise at $t = T$, the reverse SDE evolves such that, as $t \to 0$, the sample trajectory converges to the cluster set associated with the observed view with probability one.

This proof completes our theoretical analysis, while complementing previous work~\cite{zhang2025incomplete} that has empirically adopted diffusion models for end-to-end IMVC.

\begin{theorem}[Clustering Convergence of Conditional Diffusion Models]
\label{thm:diffusion_clustering}
Under the Assumptions~\ref{assump:separable_representation}-\ref{assump:perfect_approx}, consider the reverse denoising process of the conditional diffusion model defined by the reverse SDE (\eqnref{eqn:property2_pre1}). Starting from pure Gaussian noise
$\boldsymbol{x}_T \sim \mathcal{N}\left(0, \sigma_T^2 I_d\right)$,
as $t \to 0$, the noisy representation $\boldsymbol{x}_t$ converges almost surely to the cluster set associated with the observed view $\boldsymbol{z}_1$, namely,
\begin{equation}
\mathbb{P}\!\left( \boldsymbol{x}_t \in \mathcal{S}_{c(\boldsymbol{z}_1)} \mid \boldsymbol{z}_1 \right) \to 1, 
\qquad t \to 0.
\label{eq:cluster_convergence_probability}
\end{equation}
\end{theorem}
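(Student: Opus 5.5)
The key observation is that, under Assumption~\ref{assump:perfect_approx}, the reverse SDE (\eqnref{eqn:property2_pre1}) driven by the true conditional score reproduces the forward marginals in time. Its law at each $t$ is therefore
\[
p_t(\boldsymbol{x}_t\mid \boldsymbol{z}_1)=\int \mathcal{N}(\boldsymbol{x}_t;\alpha_t \boldsymbol{z}_2,\sigma_t^2 I_d)\,p(\boldsymbol{z}_2\mid \boldsymbol{z}_1)\,d\boldsymbol{z}_2 .
\]
This is the standard time-reversal result (Anderson; Song et al.), which I would invoke rather than reprove. The claim then reduces to a statement about this explicit Gaussian mixture, and the stochastic trajectory itself never has to be analysed.

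\textbf{Step 1: localise the conditional law of $\boldsymbol{z}_2$.} By Assumption~\ref{assump:view_consistency}, $c(\boldsymbol{z}_2)=c(\boldsymbol{z}_1)=:k$ almost surely. Hence $p(\cdot\mid \boldsymbol{z}_1)$ is supported in the compact set $\mathcal{S}_k$ given by Assumption~\ref{assump:separable_representation}.

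\textbf{Step 2: show the marginal concentrates on $\mathcal{S}_k$ as $t\to 0$.} Write $\boldsymbol{x}_t \overset{d}{=} \alpha_t \boldsymbol{z}_2+\sigma_t \boldsymbol{\epsilon}$ with $\boldsymbol{\epsilon}\sim\mathcal{N}(0,I_d)$. Since $\sigma_t\to 0$ and $\alpha_t=e^{-\sigma_t^2/2}\to 1$,
\[
\|\boldsymbol{x}_t-\boldsymbol{z}_2\|\le (1-\alpha_t)R+\sigma_t\|\boldsymbol{\epsilon}\|\to 0
\]
in probability, where $R=\sup_{\mathcal{S}_k}\|\boldsymbol{z}\|<\infty$ by compactness. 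Consequently, for every $\eta>0$ we get $\mathbb{P}\bigl(d(\boldsymbol{x}_t,\mathcal{S}_k)<\eta\mid\boldsymbol{z}_1\bigr)\to 1$. If we choose $\eta<\delta/2$, the $\eta$-neighbourhoods of different clusters are disjoint. So $\boldsymbol{x}_t$ is eventually attributed to cluster $k$ and to no other, which is where the condition $\sigma_t<\delta/2$ of \eqnref{eqn:property3_pre1} enters.

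\textbf{Step 3: pass from the neighbourhood to the set itself.} This is the main obstacle. Taken literally, $\mathbb{P}(\boldsymbol{x}_t\in\mathcal{S}_k)\to 1$ can fail for $t>0$: the Gaussian has full support, and $\mathcal{S}_k$ may have empty interior. There are two possible resolutions.
\begin{itemize}
\item Interpret $\boldsymbol{x}_t\in\mathcal{S}_{c(\boldsymbol{z}_1)}$ via the assignment map, meaning the event $c(\boldsymbol{x}_t)=c(\boldsymbol{z}_1)$, with $c$ extended by nearest-cluster assignment on the $\delta/2$-neighbourhoods. Step 2 then gives the result directly.
\item Prove the almost-sure limit statement instead: $d(\boldsymbol{x}_t,\mathcal{S}_k)\to 0$ almost surely as $t\to0$. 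This follows because the reverse SDE has a continuous path at $t=0$ whose endpoint $\boldsymbol{x}_0$ has law $p(\boldsymbol{z}_2\mid\boldsymbol{z}_1)$, hence lies in $\mathcal{S}_k$ almost surely. Path continuity then gives $\mathbb{P}(\boldsymbol{x}_0\in\mathcal{S}_k)=1$.
\end{itemize}
I would present the first reading as the theorem's meaning and the second as a remark on almost-sure convergence. Finally, I would note that the bounded approximation error of Assumption~\ref{assump:perfect_approx} perturbs the marginals only by a term that vanishes with the error (via a Girsanov/KL bound). The conclusion is therefore unchanged in the idealised limit.
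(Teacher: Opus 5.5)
Your argument is sound and shares the paper's skeleton: view consistency places $p_0(\cdot\mid\boldsymbol{z}_1)$ on $\mathcal{S}_{c(\boldsymbol{z}_1)}$, the law of $\boldsymbol{x}_t$ is the Gaussian smoothing of that conditional prior, and the smoothing collapses as $\sigma_t\to 0$. It departs from the paper exactly where the paper is loose.

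The paper silently identifies the mixture $p_t(\boldsymbol{x}_t\mid\boldsymbol{z}_1)=\int p_t(\boldsymbol{x}_t\mid\boldsymbol{z}_2)\,p_0(\boldsymbol{z}_2\mid\boldsymbol{z}_1)\,d\boldsymbol{z}_2$ with the law of the reverse-SDE state. You instead name the time-reversal theorem this requires, which is where Assumption~\ref{assump:perfect_approx} is actually used.

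The paper's Step~2 asserts that for $\sigma_t<\delta/2$ the noised conditionals from different clusters have disjoint supports. With Gaussian kernels every such support is all of $\mathbb{R}^d$, so this is false. Your coupling bound, combined with the disjointness of $\eta$-neighbourhoods for $\eta<\delta/2$, is the correct replacement. In your version the threshold constrains $\eta$, not $\sigma_t$, so do not present \eqref{eqn:property3_pre1} as the mechanism.

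The paper's Step~3 passes from convergence of $p_t(\cdot\mid\boldsymbol{z}_2)$ to a Dirac mass at $\boldsymbol{z}_2$ to $\int_{\mathcal{S}_{c(\boldsymbol{z}_1)}}p_t(\boldsymbol{x}_t\mid\boldsymbol{z}_1)\,d\boldsymbol{x}_t\to 1$. Weak convergence does not give this for a closed set. You are right that the literal claim can fail, but the obstruction is singularity of $p_0(\cdot\mid\boldsymbol{z}_1)$, not empty interior of $\mathcal{S}_k$:
\begin{itemize}
\item If $p_0(\cdot\mid\boldsymbol{z}_1)$ has a Lebesgue density, as the paper's notation suggests, the smoothed laws converge to it in $L^1$, hence in total variation. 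The literal statement then follows even when $\mathcal{S}_k$ has empty interior (a fat Cantor set, say).
\item It also holds when $p_0(\partial\mathcal{S}_k\mid\boldsymbol{z}_1)=0$, by the portmanteau theorem applied to $\mathrm{int}\,\mathcal{S}_k$.
\item It genuinely fails when, for instance, $\mathcal{S}_k$ is Lebesgue-null, since then the probability is $0$ for every $t>0$.
\end{itemize}

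So the two routes trade off as follows. The paper's density-level route, once repaired with $L^1$ convergence, reaches the literal event in two lines, but only for absolutely continuous conditional priors. Your pathwise route needs no density and no regularity of $\mathcal{S}_k$, and it makes the role of $\delta$ transparent. The price is that you prove the assignment-level or almost-sure-distance version instead of the literal one.

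Two small repairs:
\begin{itemize}
\item In your second bullet, path continuity gives $d(\boldsymbol{x}_t,\mathcal{S}_k)\to d(\boldsymbol{x}_0,\mathcal{S}_k)=0$ almost surely. The fact that $\boldsymbol{x}_0\in\mathcal{S}_k$ comes from the law of $\boldsymbol{x}_0$, not from continuity. Continuity at $t=0$ is best justified by identifying the reverse process with the time-reversed forward path, since the exact score can be singular there.
\item Your Step~2 uses only that the noise scale vanishes. This matters because, for the OU SDE as written, the exact marginal variance is $1-e^{-\sigma_t^2}$ rather than $\sigma_t^2$, and your argument is unaffected by that discrepancy.
\end{itemize}
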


\begin{proof}
The proof proceeds in three steps, each directly grounded in the diffusion equations (\eqnref{eqn:property1_eq1}--\ref{eqn:property3_pre1}) and the stated assumptions.

\paragraph{Step 1: Cluster Concentration of the Conditional Prior.}

By the view-consistency assumption (Assumption~\ref{assump:view_consistency}), for a given observed view representation $\boldsymbol{z}_1$, the target representation $\boldsymbol{z}_2$ must lie on the same cluster. Therefore, the support of the conditional prior distribution satisfies
\begin{equation}
\mathrm{supp}\big(p_0(\boldsymbol{z}_2 \mid \boldsymbol{z}_1)\big) \subseteq \mathcal{S}_{c(\boldsymbol{z}_1)}.
\label{eq:conditional_prior_support}
\end{equation}

This implies that the conditional prior assigns nonzero probability only within the unique cluster set corresponding to $\boldsymbol{z}_1$, and thus no probability mass is distributed across different or disjoint clusters.

\paragraph{Step 2: Cluster Contraction of the Conditional Posterior.}

By Bayes' rule, the conditional posterior distribution of the noisy representation at time $t$ is
\begin{equation}
p_t(\boldsymbol{x}_t \mid \boldsymbol{z}_1) 
= \int_{\mathcal{Z}} p_t(\boldsymbol{x}_t \mid \boldsymbol{z}_2)\, p_0(\boldsymbol{z}_2 \mid \boldsymbol{z}_1)\, d\boldsymbol{z}_2.
\label{eq:posterior_decomposition}
\end{equation}

From the Gaussian marginal distribution of the forward process (\eqnref{eqn:property1_pre1}),
we know that as $t \to 0$, the accumulated noise variance satisfies $\sigma_t \to 0$. Consequently, each conditional Gaussian distribution $p_t\left(\boldsymbol{x}_t | \boldsymbol{z}_2 \right)$ contracts to a small neighborhood of the clean representation $\boldsymbol{z}_2$.

By the cluster separability condition (\eqnref{eqn:property3_pre1}), when $\sigma_t < \delta/2$, the supports of the noisy conditional distributions $p_t(\boldsymbol{x}_t \mid \boldsymbol{z}_2)$ corresponding to different cluster sets are disjoint. 
Combining this with \eqref{eq:conditional_prior_support}, the support of the posterior distribution $p_t(\boldsymbol{x}_t \mid \boldsymbol{z}_1)$ is contained entirely within a neighborhood of $\mathcal{S}_{c(\boldsymbol{z}_1)}$.

\paragraph{Step 3: Asymptotic Convergence of Cluster Probability.}

As $t \to 0$, $\delta \to 0$, the Gaussian distribution converges to a Dirac measure centered at $\boldsymbol{z}_2$:
\begin{equation}
p_t(\boldsymbol{x}_t \mid \boldsymbol{z}_2) 
\to \delta(\boldsymbol{x}_t - \boldsymbol{z}_2).
\label{eq:dirac_limit}
\end{equation}

Substituting this into \eqref{eq:posterior_decomposition}, we obtain
\begin{equation}
\lim_{t \to 0} p_t(\boldsymbol{x}_t \mid \boldsymbol{z}_1)
= p_0(\boldsymbol{x}_0 \mid \boldsymbol{z}_1),
\label{eq:posterior_limit}
\end{equation}
whose support lies entirely within $\mathcal{S}_{c(\boldsymbol{z}_1)}$.

Therefore, integrating the posterior distribution over the cluster set yields
\begin{equation}
\mathbb{P}\!\left( \boldsymbol{x}_t \in \mathcal{S}_{c(\boldsymbol{z}_1)} \mid \boldsymbol{z}_1 \right)
=
\int_{\mathcal{S}_{c(\boldsymbol{z}_1)}} 
p_t(\boldsymbol{x}_t \mid \boldsymbol{z}_1)\, d\boldsymbol{x}_t
\to 1,
\qquad t \to 0.
\label{eq:probability_limit}
\end{equation}

This completes the proof.
\end{proof}

We note that Thm.~\ref{thm:diffusion_clustering} establishes clustering convergence only in the asymptotic regime $t \to 0$.
However, the diffusion equations reveal three intrinsic structural properties that constitute theoretical limitations of diffusion models in the IMVC setting, as discussed in~\secref{sec:diffusion-limitations}.

\section{Proof of Corollaries on Flow Matching}\label{sec:prove-corollary}
This section complements \secref{sec:benefits-of-flow-matching} by supplying proofs of the corollaries on flowing matching.

\begin{proof}[Proof of Cor.\ref{corollary_2}]
The initial condition of flow matching is $\bmx_0 = \bmz_1$, and since $\bmz_1 \in \mathcal{S}_{c(\bmz_1)}$ (the cluster set of the observed view), the starting point itself contains complete cluster structure information. There is no need to mine the cluster structure from noise, eliminating path redundancy at its source.
    
The linear interpolation path in \eqnref{eqn:flow-1} is the shortest path in Euclidean space connecting $\bmz_1$ and $\bmz_2$, with a path length of $\|\bmz_2 - \bmz_1\|$ and no detours.

\end{proof}

\begin{proof}[Proof of Cor.\ref{corollary_3}]
   The target vector field for flow matching is $\bmv_{\text{target}} = \bmz_2 - \bmz_1$ (\eqnref{eqn:flow-2}). By the view consistency assumption (\secref{sec:theoretical-analysis}), $\bmz_1, \bmz_2 \in \mathcal{S}_{c(\bmz_1)}$. Therefore, the target vector field always points inside the same cluster set, and the vector field $\bmv_\theta(\cdot)$ fitted by the neural network naturally possesses a strong cluster-guiding property.

    By the uniqueness theorem for ODE solutions, for any two initial conditions $\bmz_1^i,\bmz_2^i\in \mathcal{S}_i$ and $\bmz_1^j,\bmz_2^j \in \mathcal{S}_j \ (i \neq j)$ belonging to different cluster sets, their corresponding ODE solution trajectories $\bmx_t^i, \bmx_t^j$ never intersect. That is:
    \begin{align}
    \bmx_t^i \in \mathcal{S}_i, \quad \bmx_t^j \in \mathcal{S}_j,\quad \forall t \in [0,1]\\ \bmx_t^i = (1-t)\bmz_1^i+t\bmz_2^i,\quad \bmx_t^j = (1-t)\bmz_1^j+t\bmz_2^j
    \end{align}

    Due to the non-intersecting property of trajectories from different clusters, flow matching guarantees strict separation of sample representations from different clusters within any finite number of steps. 

\end{proof}

\section{Proof of Reversibility in Flow Matching}
\label{app:integral_proof}

In this section, we provide a proof of the reversibility property of the flow matching ODE framework, which complements our discussion in \secref{sec:preliminary-flow-matching}.

\subsection{Preliminaries of Forward Flow}
First, recall the forward flow matching ODE that evolves a sample from $\bm{z}_1$ to $\bm{z}_2$ over time $t \in [0,1]$:
\begin{equation}
\frac{d\bm{x}_t}{dt} = \bm{v}_\theta(\bm{x}_t, t), \quad \bm{x}_0 = \bm{z}_1, \quad \bm{x}_1 = \bm{z}_2.
\end{equation}
The solution to this ODE is the integral:
\begin{equation}
\bm{x}_1 - \bm{x}_0 = \int_0^1 \bm{v}_\theta(\bm{x}_t, t) dt.
\end{equation}

\subsection{Reverse Flow via Variable Substitution}
To derive the reverse flow (from $\bm{z}_2$ to $\bm{z}_1$), we introduce a reversed time variable $s = 1 - t$. 
When $t = 0$, $s = 1$; when $t = 1$, $s = 0$. 
The differential of $t$ with respect to $s$ is:
\begin{equation}
dt = -ds.
\end{equation}

Define the reversed state variable $\bm{x}_s = \bm{x}_{1-t}$, which satisfies:
\begin{equation}
\bm{x}_s = (1-s)\bm{z}_2 + s\bm{z}_1,
\end{equation}
with initial condition $\bm{x}_0 = \bm{z}_2$ and final condition $\bm{x}_1 = \bm{z}_1$ (consistent with reverse flow).

Compute the time derivative of $\bm{x}_s$ with respect to $s$:
\begin{align}
\frac{d\bm{x}_s}{ds} &= \frac{d\bm{x}_{1-t}}{ds} = \frac{d\bm{x}_{1-t}}{d(1-t)} \cdot \frac{d(1-t)}{ds} \\
&= \bm{v}_\theta(\bm{x}_{1-t}, 1-t) \cdot (-1) \\
&= -\bm{v}_\theta(\bm{x}_{1-t}, 1-t).
\end{align}

By defining the reverse vector field as $\bm{v}_{\theta_R}(\bm{x}_s, s) = -\bm{v}_\theta(\bm{x}_{1-t}, 1-t)$,  the corresponding reverse ODE simplifies to:
\begin{equation}
\frac{d\bm{x}_s}{ds} = \bm{v}_{\theta_R}(\bm{x}_s, s), \quad \bm{x}_0 = \bm{z}_2, \quad \bm{x}_1 = \bm{z}_1.
\end{equation}

\subsection{Verification of Bidirectional Consistency}
Integrate the reverse ODE to verify it recovers the original state:
\begin{align}
\bm{x}_1 - \bm{x}_0 &= \int_0^1 \bm{v}_{\theta_R}(\bm{x}_s, s) ds \\
&= -\int_0^1 \bm{v}_\theta(\bm{x}_{1-t}, 1-t) ds \\
&= \int_1^0 \bm{v}_\theta(\bm{x}_t, t) dt = -\int_0^1 \bm{v}_\theta(\bm{x}_t, t) dt \\
&= \bm{z}_1 - \bm{z}_2.
\end{align}

This confirms that the reverse vector field $-\bm{v}_\theta(\cdot)$ exactly reverses the forward flow, proving that flow matching only requires training a single vector field to achieve bidirectional view completion—unlike diffusion models that need two separate models for forward and reverse processes.

\section{Mathematical Proofs for Centered Log-Ratio (clr) Transformation}
\label{sec:clr-proof}

In this section, we formally prove that the centered logratio (clr) transformation maps the $D$-dimensional probability simplex onto the $(D-1)$-dimensional linear subspace of $\mathbb{R}^D$ defined by the zero-sum constraint. This ensures that clr-transformed data preserves the relative information of the original compositional distributions while lying in a linear space, which allows standard fusion methods to be applied validly without violating simplex consistency. This complements the discussion in \secref{sec:entropic-alignment}.

\subsection{Notation and Definitions}

Let $\mathcal{S}^D$ be the $D$-part simplex defined as:
\begin{equation}
    \mathcal{S}^D = \left\{ \mathbf{x} = [x_1, x_2, \dots, x_D] \in \mathbb{R}^D \;\middle|\; x_i > 0, \sum_{i=1}^D x_i = \kappa \right\},
\end{equation}
where $\kappa$ is a constant (typically $1$ or $100$).

The \textbf{geometric mean} of a composition $\mathbf{x}$ is denoted by $g(\mathbf{x})$:
\begin{equation}
    g(\mathbf{x}) = \left( \prod_{i=1}^D x_i \right)^{1/D}.
\end{equation}

The \textbf{Centered Log-Ratio (clr)} transformation is defined as the mapping $\text{clr}: \mathcal{S}^D \to \mathbb{R}^D$:
\begin{equation}
    \text{clr}(\mathbf{x}) = \mathbf{z} = \left[ \ln\frac{x_1}{g(\mathbf{x})}, \ln\frac{x_2}{g(\mathbf{x})}, \dots, \ln\frac{x_D}{g(\mathbf{x})} \right],
\end{equation}
where the $i$-th component is $z_i = \ln x_i - \ln g(\mathbf{x})$.

\subsection{Proposition 1: The Image of clr is a Linear Subspace}

\begin{proposition}
The image of the simplex under the clr transformation, denoted as $\mathcal{V}_0 = \{ \mathbf{z} \in \mathbb{R}^D \mid \exists \mathbf{x} \in \mathcal{S}^D, \mathbf{z} = \text{clr}(\mathbf{x}) \}$, is a linear subspace of $\mathbb{R}^D$ of dimension $D-1$. Specifically, it is the hyperplane defined by the sum-to-zero constraint:
\begin{equation}
    \mathcal{V}_0 = \left\{ \mathbf{z} \in \mathbb{R}^D \;\middle|\; \sum_{i=1}^D z_i = 0 \right\}.
\end{equation}
\end{proposition}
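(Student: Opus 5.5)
To prove the proposition, I would establish the two inclusions $\mathrm{clr}(\mathcal{S}^D)\subseteq\mathcal{V}_0$ and $\mathcal{V}_0\subseteq\mathrm{clr}(\mathcal{S}^D)$. Afterwards I would note separately that $\mathcal{V}_0$ is a linear subspace of dimension $D-1$. That last fact is immediate: $\mathcal{V}_0$ is the kernel of the nonzero linear functional $\mathbf{z}\mapsto\sum_i z_i=\langle \mathbf{1},\mathbf{z}\rangle$, so rank–nullity gives $\dim\mathcal{V}_0=D-1$.

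\emph{Forward inclusion.} For $\mathbf{x}\in\mathcal{S}^D$ all $x_i>0$, so $g(\mathbf{x})>0$ and every logarithm is defined. Then
\begin{equation*}
\sum_{i=1}^D z_i=\sum_{i=1}^D \ln x_i - D\ln g(\mathbf{x}) = \sum_{i=1}^D\ln x_i - D\cdot\tfrac{1}{D}\sum_{i=1}^D \ln x_i = 0,
\end{equation*}
so $\mathrm{clr}(\mathbf{x})\in\mathcal{V}_0$.

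\emph{Reverse inclusion.} This is the only step needing a construction. Given $\mathbf{z}\in\mathcal{V}_0$, I would take the softmax-type preimage
\begin{equation*}
x_i=\kappa\,\frac{e^{z_i}}{\sum_{j=1}^D e^{z_j}}.
\end{equation*}
Each $x_i$ is positive and the $x_i$ sum to $\kappa$, so $\mathbf{x}\in\mathcal{S}^D$. Write $C=\kappa/\sum_j e^{z_j}$. Then
\begin{equation*}
\ln g(\mathbf{x})=\ln C+\tfrac{1}{D}\sum_i z_i=\ln C,
\end{equation*}
where the last equality uses the zero-sum constraint. Hence
\begin{equation*}
\mathrm{clr}(\mathbf{x})_i=(\ln C+z_i)-\ln C=z_i .
\end{equation*}

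The main subtlety is in this reverse step: the constraint $\sum_i z_i=0$ is used exactly to cancel the normalizing constant, and positivity of the $x_i$ must be checked so that $\mathbf{x}$ genuinely lies in the open simplex. I would also remark that clr is invariant under positive rescaling of $\mathbf{x}$, which is why the choice of $\kappa$ is irrelevant. As a byproduct, this construction shows that clr is a bijection from $\mathcal{S}^D$ onto $\mathcal{V}_0$, with inverse given by $\kappa\cdot\mathrm{softmax}$. That inverse is what the entropic alignment of \secref{sec:entropic-alignment} relies on when it maps the fused clr vector back to the simplex.
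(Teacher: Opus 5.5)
Your proof is correct, and it proves more than the paper's own argument does. The paper's Step 1 is your forward inclusion $\mathrm{clr}(\mathcal{S}^D)\subseteq W$, where $W=\{\mathbf{z}:\sum_i z_i=0\}$. Its Step 2 checks the subspace axioms for $W$ by hand (zero vector, closure under addition and scalar multiplication). It then reads off $\dim W=D-1$ from the single constraint, which is the same content as your kernel/rank--nullity remark, just less compact.

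However, the paper never proves the reverse inclusion $W\subseteq\mathrm{clr}(\mathcal{S}^D)$. Strictly, its argument only shows that the image lies inside a $(D-1)$-dimensional subspace, not that the image is that subspace, so the asserted equality is left unjustified there. Your softmax preimage $x_i=\kappa e^{z_i}/\sum_j e^{z_j}$ supplies exactly this missing surjectivity step: positivity puts $\mathbf{x}$ in the open simplex, and the zero-sum constraint cancels the normalizing constant. It also makes explicit the inverse map that the entropic alignment relies on when it returns the fused clr vector to the simplex.

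One small point: the construction as written only shows that $\kappa\,\mathrm{softmax}$ is a right inverse of clr. To justify your bijection byproduct, add the one-line check that $\kappa\,\mathrm{softmax}(\mathrm{clr}(\mathbf{x}))_i=\kappa\,x_i/\sum_j x_j=x_i$ for $\mathbf{x}\in\mathcal{S}^D$, which gives injectivity. This is not needed for the proposition itself.
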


\begin{proof} This proof consists of two steps, as follows.

\paragraph{Step 1: Verify the sum-to-zero constraint.}
For any $\mathbf{z} = \text{clr}(\mathbf{x})$, the sum of its components is:
\begin{align}
    \sum_{i=1}^D z_i &= \sum_{i=1}^D \left( \ln x_i - \ln g(\mathbf{x}) \right) \nonumber \\
    &= \sum_{i=1}^D \ln x_i - \sum_{i=1}^D \ln g(\mathbf{x}) \nonumber \\
    &= \sum_{i=1}^D \ln x_i - D \cdot \ln g(\mathbf{x}).
\end{align}
Substituting the definition $g(\mathbf{x}) = (\prod_{j=1}^D x_j)^{1/D}$:
\begin{equation}
    D \cdot \ln g(\mathbf{x}) = D \cdot \ln \left( \prod_{j=1}^D x_j \right)^{1/D} = D \cdot \frac{1}{D} \sum_{j=1}^D \ln x_j = \sum_{j=1}^D \ln x_j.
\end{equation}
Therefore:
\begin{equation}
    \sum_{i=1}^D z_i = \sum_{i=1}^D \ln x_i - \sum_{i=1}^D \ln x_i = 0.
\end{equation}
Thus, $\mathcal{V}_0 \subseteq \{ \mathbf{z} \mid \sum z_i = 0 \}$.

\paragraph{Step 2: Verify Linear Subspace Axioms.}
Let $W = \{ \mathbf{z} \in \mathbb{R}^D \mid \sum_{i=1}^D z_i = 0 \}$. To prove $W$ is a linear subspace, we must show it contains the zero vector and is closed under addition and scalar multiplication.

\begin{enumerate}
    \item \textit{Zero Vector:} The vector $\mathbf{0} = [0, \dots, 0]$ satisfies $\sum 0 = 0$. Thus, $\mathbf{0} \in W$.
    
    \item \textit{Closure under Addition:} Let $\mathbf{u}, \mathbf{v} \in W$. Then $\sum u_i = 0$ and $\sum v_i = 0$.
    Consider $\mathbf{w} = \mathbf{u} + \mathbf{v}$. The sum of components is:
    \begin{equation}
        \sum_{i=1}^D w_i = \sum_{i=1}^D (u_i + v_i) = \sum_{i=1}^D u_i + \sum_{i=1}^D v_i = 0 + 0 = 0.
    \end{equation}
    Thus, $\mathbf{u} + \mathbf{v} \in W$.
    
    \item \textit{Closure under Scalar Multiplication:} Let $\mathbf{u} \in W$ and $c \in \mathbb{R}$.
    Consider $\mathbf{v} = c\mathbf{u}$. The sum of components is:
    \begin{equation}
        \sum_{i=1}^D v_i = \sum_{i=1}^D (c \cdot u_i) = c \sum_{i=1}^D u_i = c \cdot 0 = 0.
    \end{equation}
    Thus, $c\mathbf{u} \in W$.
\end{enumerate}

Since $W$ satisfies all axioms, it is a linear subspace of $\mathbb{R}^D$. Since the constraint $\sum z_i = 0$ is a single linearly independent equation in $\mathbb{R}^D$, the dimension of this subspace is $D-1$.
\end{proof}

\section{Experimental Supplementary Materials}\label{sec:experimental-supplement}

\subsection{Detailed dataset statistics and implementation specifics}
\label{sec:appendix-implementation-details}

\subsubsection{Implementation Details}
The experiments are conducted on one NVIDIA RTX4090 using PyTorch. 
We refer readers to Table~\ref{tab:experimental_setup} for detailed configurations of our experiments.

\begin{table*}[t]
    \centering
    \caption{\textbf{Experimental Configurations.} $\tau$ indicates  the view missing rate. Epochs represents the number of training epochs. Step indicates the number of flow matching steps. $\lambda_1, \lambda_2, \lambda_3, \lambda_4$ denote the hyperparameters in \secref{sec:analysis}.}
    \label{tab:experimental_setup}
    \resizebox{\linewidth}{!}{
    \setlength{\tabcolsep}{6pt} 
    \begin{tabular}{lccccc}
        \toprule
        \multirow{2}{*}{\textbf{Setting}} & \multicolumn{5}{c}{\textbf{Dataset}} \\
        \cmidrule{2-6}
        & \textbf{Synthetic3d} & \textbf{CUB} & \textbf{HandWritten} & \textbf{LandUse-21} & \textbf{Fashion} \\  
        \midrule
        \multicolumn{6}{l}{\textit{Shared configurations across all datasets}}\\
        \midrule
        Optimizer & \multicolumn{5}{c}{Adam} \\
        Learning Rate & \multicolumn{5}{c}{$1 \times 10^{-3}$} \\ 
        Epochs & \multicolumn{5}{c}{200} \\
        $\tau$ & \multicolumn{5}{c}{$\{0.1, 0.3, 0.5\}$} \\  
        Step & \multicolumn{5}{c}{1 (train and inference)} \\
        \midrule
        \multicolumn{6}{l}{\textit{Individual configurations per dataset}}\\
        \midrule
        Batch Size & 128 & 256 & 256 & 128 & 128 \\
        $\lambda_1$ & 100.0 & 1.0 & 1000.0 & 1000.0 & 10.0 \\
        $\lambda_2$ & 1.0 & 10.0 & 1.0 & 1.0 & 1.0 \\
        $\lambda_3$ & 1.0 & 1.0 & 1.0 & 1.0 & 1.0 \\
        $\lambda_4$ & 1.0 & 1.0 & 1.0 & 1.0 & 1.0 \\
        \bottomrule
    \end{tabular}
    }
\end{table*}

\subsection{Comparison of Different Strategies}

\begin{table}[t]
\centering
\caption{\textbf{Comparison of different strategies on CUB under different missing rate.} \textbf{ACC}, \textbf{NMI}, and \textbf{ARI} denote clustering Accuracy, Normalized Mutual Information, and Adjusted Rand Index, respectively (higher is better).
$\tau$ indicates the missing rate of views.}
\label{tab:comparison}
\resizebox{\linewidth}{!}{
\begin{tabular}{lccccccccc}
\toprule
\multirow{2}{*}{Strategies} & \multicolumn{3}{c}{$\tau = 0.1$} & \multicolumn{3}{c}{$\tau = 0.3$} & \multicolumn{3}{c}{$\tau = 0.5$} \\
\cmidrule{2-10}
                            & ACC    & NMI    & ARI    & ACC    & NMI    & ARI    & ACC    & NMI    & ARI    \\
\midrule
GAN                         & 60.25  & 58.72  & 52.64  & 55.82  & 53.43  & 48.39  & 51.36  & 49.87  & 44.52  \\
Prediction                  & 75.89  & 71.26  & 59.83  & 71.13  & 66.76  & 54.61  & 67.54  & 63.29  & 50.17  \\
Paired                      & 77.41  & 71.53  & 59.27  & 72.83  & 66.80  & 53.98  & 68.92  & 63.55  & 49.86  \\
Diffusion                   & 82.23  & 77.70  & 69.21  & 77.17  & 71.35  & 59.85  & 75.50  & \textbf{72.21}  & 59.12  \\
\midrule
Flow(Ours)                  & \textbf{86.00} & \textbf{78.19} & \textbf{72.28} & \textbf{82.67} & \textbf{72.80} & \textbf{66.31} & \textbf{80.50} & 71.74 & \textbf{63.15} \\
\bottomrule
\end{tabular}
}
\end{table}

To further evaluate the effectiveness of our flow-based generation strategy, we replaced it with alternative generative models, including GAN \cite{wang2020generative}, Prediction \cite{lin2021completer}, Paired-data-only training, and Diffusion models \cite{zhang2025incomplete}, and conducted comprehensive experiments on the CUB dataset with missing rates  of 0.1, 0.3 and 0.5, with corresponding results shown in Table~\ref{tab:comparison}.

The results consistently demonstrate that our Flow-based strategy achieves dominant performance over all comparative methods across all missing rate settings, verifying the robust and effective nature of our proposed method. Specifically, compared with the Diffusion baseline that ranks second, our method yields substantial and notable improvements in all clustering metrics under each missing rate, which fully reflects the superior generation capability of the flow-based strategy for missing view recovery. Additionally, the comparison with the ``Paired'' strategy confirms that our method can fully and efficiently leverage incomplete data instances instead of simply discarding them, which maximizes the utility of multi-view information and thus stably boosts the clustering performance even with the increase of missing rates, showing better adaptability to different missing rate scenarios.

\subsection{Visualization Analysis.}

We visualize the learned representations on the CUB, LandUse-21, Synthetic3d and Fashion datasets using t-SNE under the missing rate $\tau=0.3$, as shown in Fig.~\ref{fig:appendix-visualizations}. 
Compared with scattered and mixed raw features, the representations learned by our framework form well-separated clusters. This improvement stems from the cluster-level contrastive loss and entropy-based alignment, which enforce cross-view assignment consistency and stabilize probability predictions. Meanwhile, the flow-matching module achieves cross-view completion by performing straight-path transport between paired latent representations.

\begin{figure}[t]
    \centering 
    \begin{subfigure}[b]{0.45\textwidth}
        \centering
        \includegraphics[width=\textwidth]{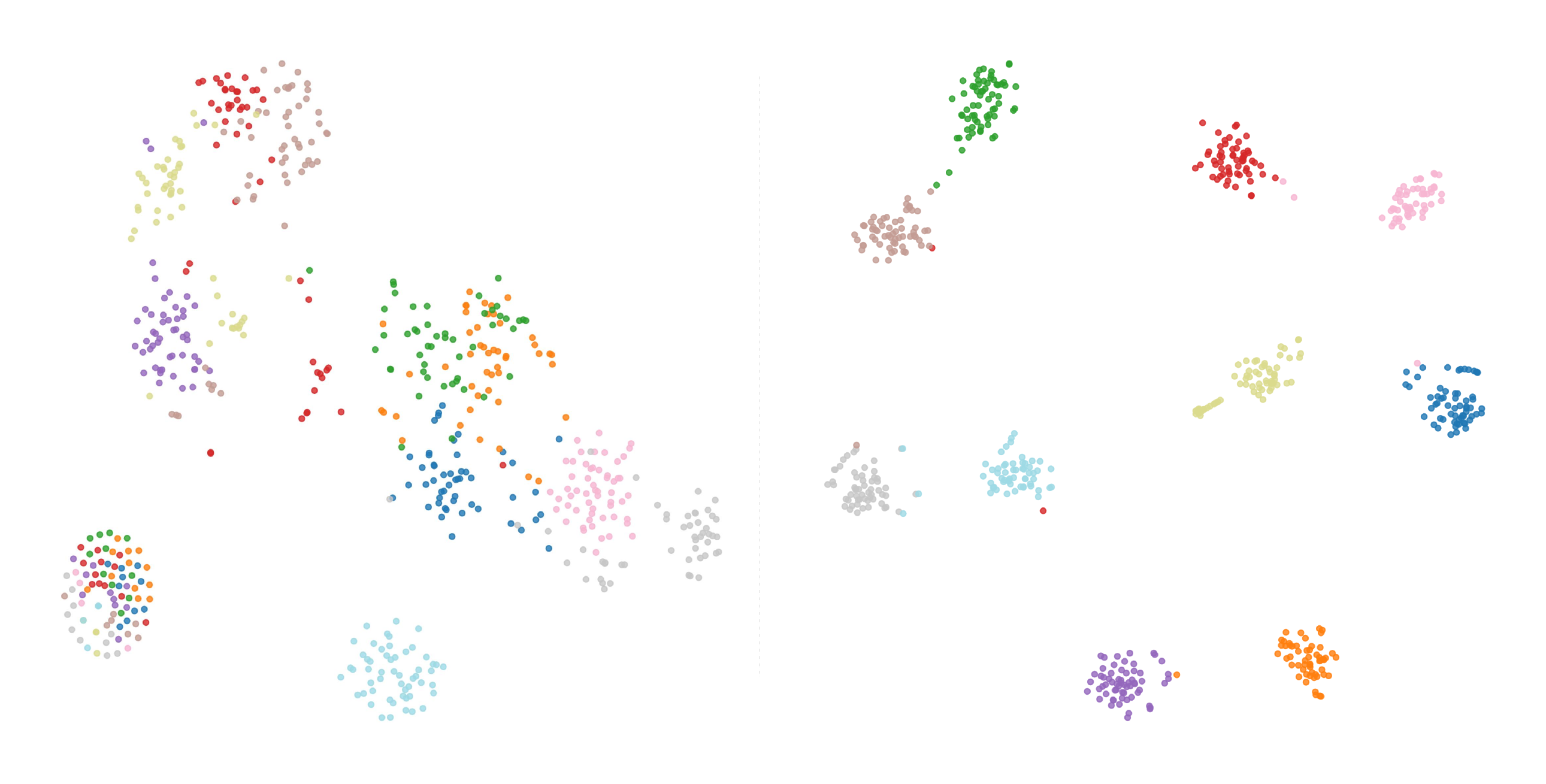} 
        \caption{CUB} 
        \label{subfig:cub}          
    \end{subfigure}
    \hfill  
    \begin{subfigure}[b]{0.45\textwidth}
        \centering
        \includegraphics[width=\textwidth]{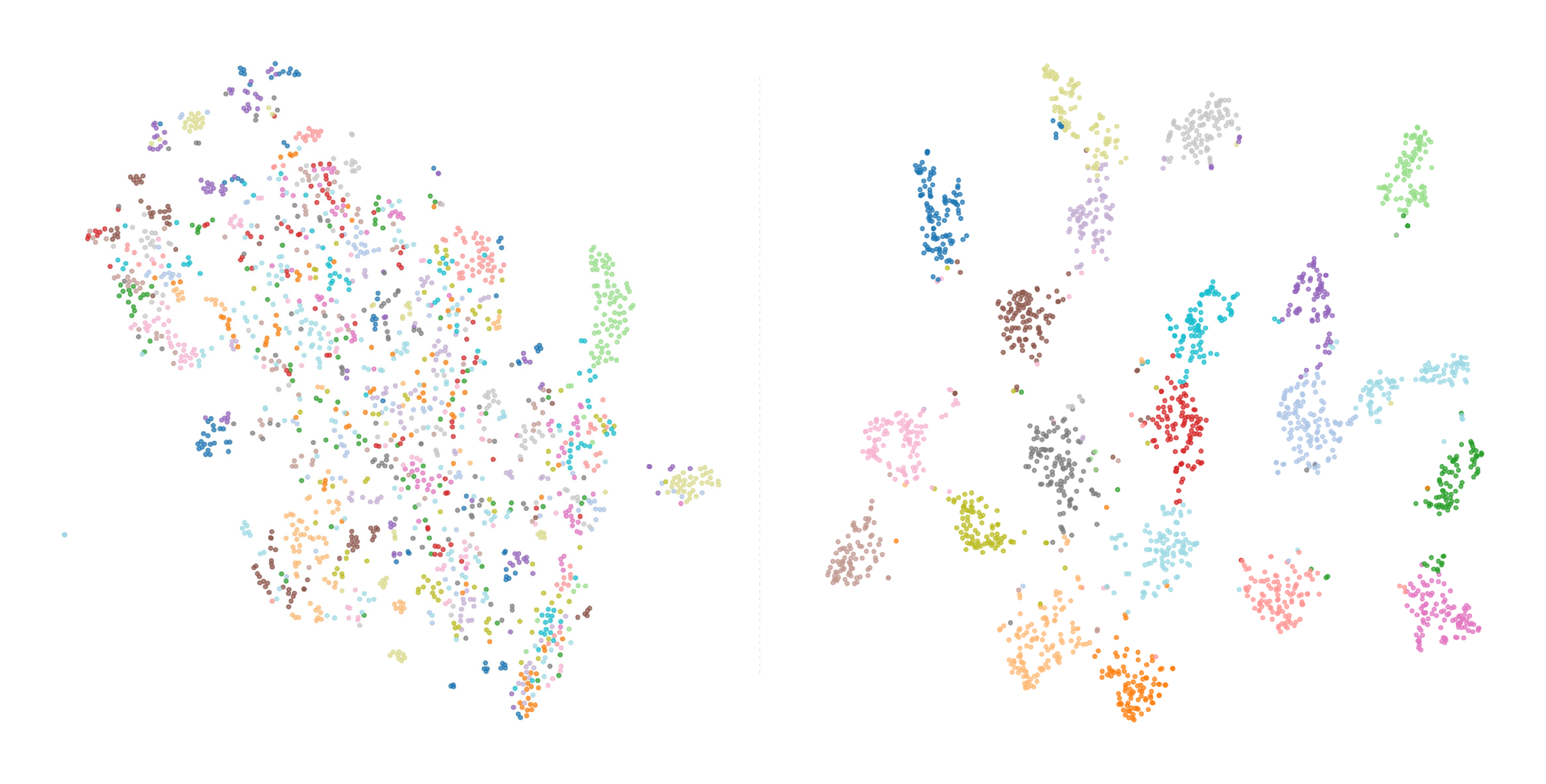}
        \caption{LandUse-21}
        \label{subfig:landuse}
    \end{subfigure}
    \hfill
    \begin{subfigure}[b]{0.45\textwidth}
        \centering
        \includegraphics[width=\textwidth]{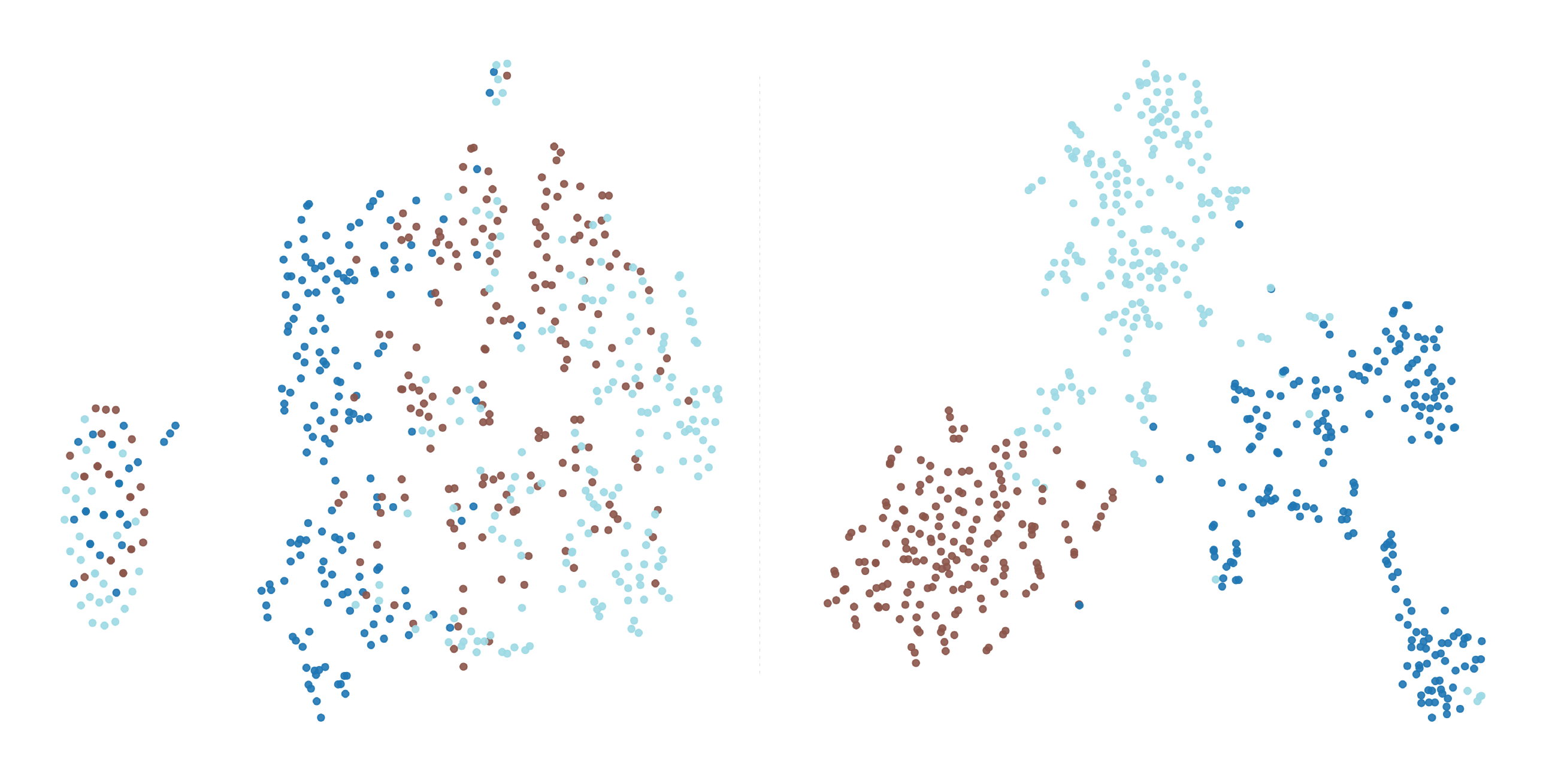}  
        \caption{Synthetic3d}  
        \label{subfig:synthetic3d}         
    \end{subfigure}
    \hfill  
    \begin{subfigure}[b]{0.45\textwidth}
        \centering
        \includegraphics[width=\textwidth]{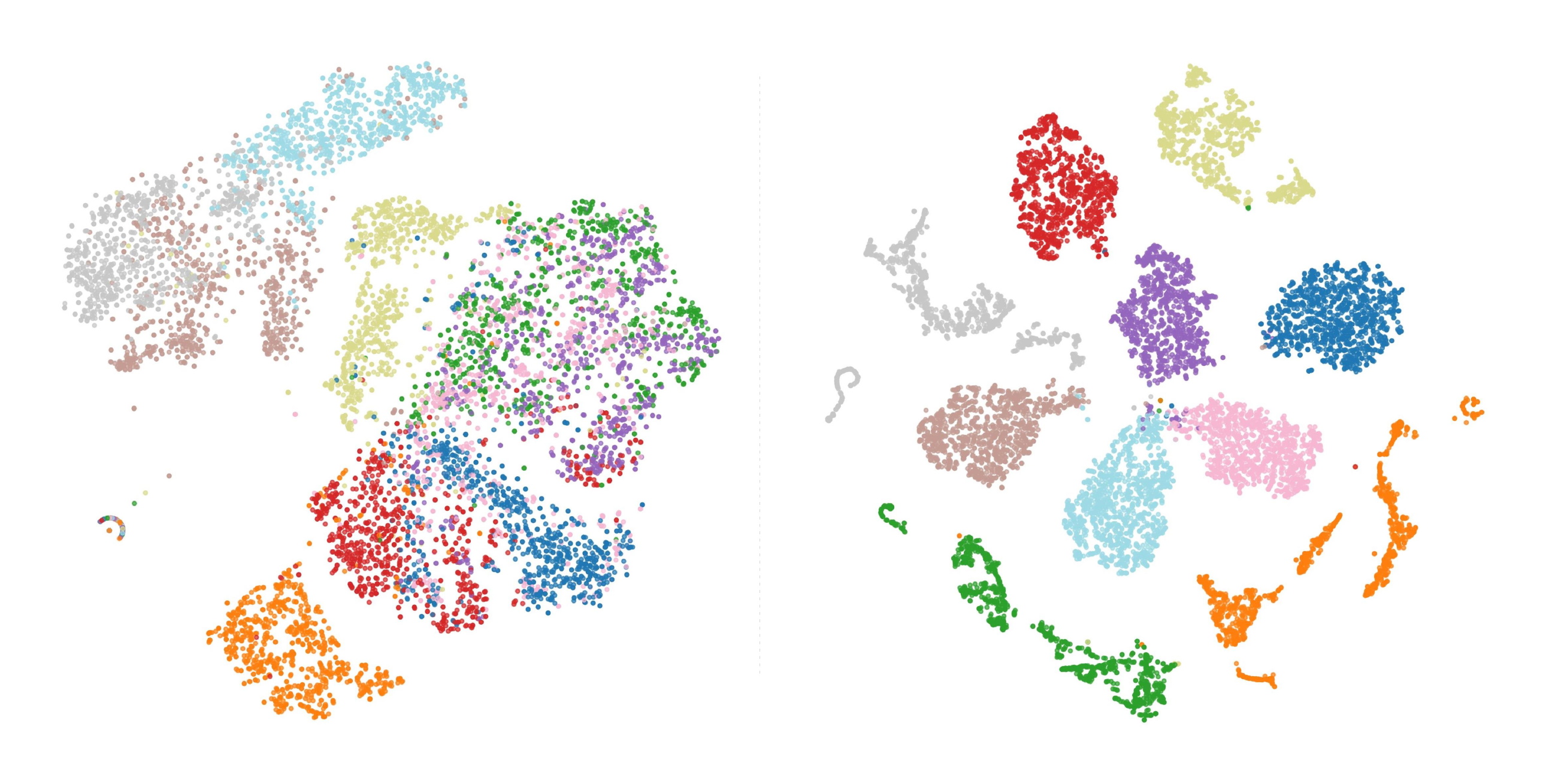}
        \caption{Fashion}
        \label{subfig:fashion}
    \end{subfigure}
    \caption{\textbf{Representation visualization.} (a), (b), (c), and (d) show the t-SNE visualizations of CUB, LandUse-21, Synthetic3d, and Fashion datasets, respectively, with a missing rate of 0.3.
}
    \label{fig:appendix-visualizations}
\end{figure}

\end{document}